\definecolor{blue}{rgb}{1,0,0}
\def\v{{\mathbf{v}}}
\def\x{{\mathbf{x}}}
\def\y{{\mathbf{y}}}
\def\z{{\mathbf{z}}}
\DeclareMathOperator*{\argmin}{arg\,min}
\newtheorem{prop}{Proposition}[section]
\newtheorem{definition}{Definition}
\newcommand{\header}[1]{\vskip 0.03in \noindent{\bf #1}}
\newcommand{\tabincell}[2]{\begin{tabular}{@{}#1@{}}#2\end{tabular}}
\newcommand{\apbbox}[1]{AP$^\text{bb}_\text{#1}$}
\newcommand{\apmask}[1]{AP$^\text{mk}_\text{#1}$}
\definecolor{Gray}{gray}{0.5}
\definecolor{nicergreen}{rgb}{0.13, 0.54, 0.13}
\definecolor{nicered}{rgb}{0.83, 0.16, 0.16}
\definecolor{Highlight}{HTML}{39b54a}  %
\newcommand\showdiff[1]{\textbf{\textcolor{nicergreen}{#1}}}
\newcommand\showdiffn[1]{\textcolor{black}{#1}}
\newcommand{\hl}[1]{\textcolor{nicergreen}{#1}}
\newcommand{\cgap}[2]{
\fontsize{6pt}{1em}\selectfont{(${#1}${#2})}
}
\newcommand{\cgaphl}[2]{
\fontsize{6pt}{1em}\selectfont{\textcolor{nicergreen}{(${#1}$\textbf{#2})}}
}
\newcommand*{\mathcolor}{}
\def\mathcolor#1#{\mathcoloraux{#1}}
\newcommand*{\mathcoloraux}[3]{%
  \protect\leavevmode
  \begingroup
    \color#1{#2}#3%
  \endgroup
}
\title{What Makes for Good Views for Contrastive Learning?}
\author{
  Yonglong Tian%
  \\
  \;\;MIT CSAIL\\
  \And
  \;\;\;\;Chen Sun\\
  \;\;\;\;Google, Brown University\\
  \And
  Ben Poole\\
  Google Research\\
  \AND
  Dilip Krishnan\\
  Google Research\\
  \And
  Cordelia Schmid\\
  Google Research\\
  \And
  Phillip Isola\\
  MIT CSAIL\\
}
\begin{document}
\maketitle

\begin{abstract}

Contrastive learning between multiple views of the data has recently achieved state of the art performance in the field of self-supervised representation learning. Despite its success, the influence of different view choices has been less studied. In this paper, we use theoretical and empirical analysis to better understand the importance of view selection, and argue that we should reduce the mutual information (MI) between views while keeping task-relevant information intact.
To verify this hypothesis, we devise unsupervised and semi-supervised frameworks that learn effective views by aiming to reduce their MI. We also consider data augmentation as a way to reduce MI, and show that increasing data augmentation indeed leads to decreasing MI and improves downstream classification accuracy. As a by-product, we achieve a new state-of-the-art accuracy on unsupervised pre-training for ImageNet classification ($73\%$ top-1 linear readout with a ResNet-50)\footnote{Project page: \url{http://hobbitlong.github.io/InfoMin}}. 

\end{abstract}

\section{Introduction}

It is commonsense that how you look at an object does not change its identity. Nonetheless, Jorge Luis Borges imagined the alternative. In his short story on \emph{Funes the Memorious}, the titular character becomes bothered that a ``dog at three fourteen (seen from the side) should have the same name as the dog at three fifteen (seen from the front)"~\cite{borges1962funes}. The curse of Funes is that he has a perfect memory, and every new way he looks at the world reveals a percept minutely distinct from anything he has seen before. He cannot collate the disparate experiences.

Most of us, fortunately, do not suffer from this curse. We build mental representations of identity that discard \emph{nuisances} like time of day and viewing angle. The ability to build up \emph{view-invariant} representations is central to a rich body of research on multiview learning. These methods seek representations of the world that are invariant to a family of viewing conditions. Currently, a popular paradigm is contrastive multiview learning, where two views of the same scene are brought together in representation space, and two views of different scenes are pushed apart.

This is a natural and powerful idea but it leaves open an important question: ``which viewing conditions should we be invariant to?" It's possible to go too far: if our task is to classify the time of day then we certainly should not use a representation that is invariant to time. Or, like Funes, we could go not far enough: representing each specific viewing angle independently would cripple our ability to track a dog as it moves about a scene. 

We therefore seek representations with enough invariance to be robust to inconsequential variations but not so much as to discard information required by downstream tasks. 
In contrastive learning, the choice of ``views" is what controls the information the representation captures, as the framework results in representations that focus on the shared information between views~\cite{oord2018representation}. Views are commonly different sensory signals, like photos and sounds \cite{arandjelovic2018objects}, or different image channels \cite{tian2019contrastive} or slices in time \cite{tschannen2019self}, but may also be different ``augmented" versions of the same data tensor~\cite{chen2020simple}. If the shared information is small, then the learned representation can discard more information about the input and achieve a greater degree of invariance against nuisance variables. How can we find the right balance of views that share just the information we need, no more and no less?

We investigate this question in two ways: 1) we demonstrate that the optimal choice of views depends critically on the downstream task. If you know the task, it is often possible to design effective views. 2) We empirically demonstrate that for many common ways of generating views, there is a sweet spot in terms of downstream performance where the mutual information (MI) between views is neither too high nor too low.

Our analysis suggests an ``InfoMin principle". A good set of views are those that share the minimal information necessary to perform well at the downstream task. This idea is related to the idea of minimal sufficient statistics \cite{soatto2014visual} and the Information Bottleneck theory \cite{tishby2000information,alemi2016deep}, which have been previously articulated in the representation learning literature. This principle also complements the already popular ``InfoMax principle"~\cite{linsker1988self}
, which states that a goal in representation learning is to capture as much information as possible about the stimulus. We argue that maximizing information is only useful in so far as that information is task-relevant. Beyond that point, learning representations that throw out information about nuisance variables is preferable as it can improve generalization and decrease sample complexity on downstream tasks \cite{soatto2014visual}.

Based on our findings, we also introduce a semi-supervised method to \emph{learn} views that are effective for learning good representations when the downstream task is known. We additionally demonstrate that the InfoMin principle can be practically applied by simply seeking stronger data augmentation to further reduce mutual information toward the sweet spot. This effort results in state of the art accuracy on a standard benchmark. 

Our contributions include:
\begin{itemize}[leftmargin=5.5mm]
    \item Demonstrating that optimal views for contrastive representation learning are task-dependent.
    \item Empirically finding a U-shaped relationship between an estimate of mutual information and representation quality in a variety of settings.
    \item A new semi-supervised method to learn effective views for a given task.
    \item Applying our understanding to achieve state of the art accuracy of $73.0\%$ on the ImageNet linear readout benchmark with a ResNet-50. 
\end{itemize}

\section{Related Work}
Recently the most competitive methods for learning representations without labels have been self-supervised contrastive representation learning  \cite{oord2018representation,hjelm2018learning,wu2018unsupervised,tian2019contrastive,sohn2016improved,chen2020simple}. 
These methods learn representations by a ``contrastive'' loss which pushes apart dissimilar data pairs while pulling together similar pairs, an idea similar to exemplar learning~\cite{dosovitskiy2014discriminative}.
Models based on contrastive losses have significantly outperformed other approaches~\cite{zhang2017split,kingma2013auto,pathak2016context,tian2019contrastive,donahue2019large,noroozi2016unsupervised,doersch2015unsupervised,gidaris2018unsupervised,zhang2019aet}.

One of the major design choices in contrastive learning is how to select the similar~(or \emph{positive}) and dissimilar~(or \emph{negative}) pairs. The standard approach for generating positive pairs without additional annotations is to create multiple \emph{views} of each datapoint. For example: luminance and chrominance decomposition~\cite{tian2019contrastive}, randomly augmenting an image twice~\cite{wu2018unsupervised,chen2020simple,bachman2019learning,he2019momentum,ye2019unsupervised,srinivas2020curl,zhao2020distilling,zhuang2019local}, using different time-steps of videos \cite{oord2018representation,zhuang2019unsupervised,sermanet2018time,han2019video,gordon2020watching},  patches of the same image~\cite{isola2016cooccurrence,oord2018representation,hjelm2018learning}, multiple sensory data~\cite{morgado2020audio,chung2019perfect,patrick2020multi}, text and its context~\cite{mikolov2013distributed,yang2019xlnet,logeswaran2018efficient,Kong2020A}, or representations of student and teacher models~\cite{Tian2020Contrastive}. Negative pairs can be randomly chosen images/videos/texts. 
Theoretically, we can think of the positive pairs as coming from a joint distribution over views $p(\mathbf{v_1}, \mathbf{v_2})$, and the negative pairs from a product of marginals $p(\mathbf{v_1})p(\mathbf{v_2})$. The contrastive learning objective InfoNCE~\cite{oord2018representation} (or Deep InfoMax~\cite{hjelm2018learning}) is developed to maximize a lower bound on the mutual information between the two views $I(\mathbf{v_1}; \mathbf{v_2})$. Such connection has been discussed further in~\cite{poole2019variational,tschannen2019mutual}.

Leveraging labeled data in contrastive representation learning has been shown to guide representations towards task-relevant features that improve performance \cite{zhai2019s4l,henaff2019data,khosla2020supervised,wu2018improving}. Here we use labeled data to learn better views, but still perform contrastive learning using only unlabeled data. Future work could combine these approaches to leverage labels for both view learning and representation learning. Besides, previous work~\cite{asano2019critical} has studied the effects of augmentation with different amount of images.

\section{What Are the Optimal Views for Contrastive Learning?}

In this section, we first introduce the standard multiview contrastive representation learning formulation, and then investigate what would be the optimal views for contrastive learning.

\subsection{Multiview Contrastive Learning}

Given two random variables $\mathbf{v_1}$ and $\mathbf{v_2}$, the goal of contrastive learning is to learn a parametric function to discriminate between samples from the empirical joint distribution $p(\mathbf{v_1})p(\mathbf{v_2}|\mathbf{v_1})$ and samples from the product of marginals $p(\mathbf{v_1})p(\mathbf{v_2})$. The resulting function is an estimator of the mutual information between $\mathbf{v_1}$ and $\mathbf{v_2}$, and the InfoNCE loss \cite{oord2018representation} has been shown to maximize a lower bound on $I(\mathbf{v_1}; \mathbf{v_2})$. In practice, given an anchor point $\mathbf{v_{1,i}}$, the InfoNCE loss is optimized to score the correct positive $\mathbf{v_{2,i}} \sim p(\mathbf{v_{2}}|\mathbf{v_{1,i}})$ higher compared to a set of $K$ distractors $\mathbf{v_{2,j}} \sim p(\mathbf{v_2})$:
\begin{equation}\label{eq:infonce_loss}
    \mathcal{L}_\text{NCE} = - \mathbb{E}\left[\log \frac{e^{h(\mathbf{v_{1,i}}, \mathbf{v_{2,i}})}}{\sum_{j=1}^{K}e^{h(\mathbf{v_{1,i}}, \mathbf{v_{2,j}})}}\right]
\end{equation}

Minimizing this loss equivalently maximizes a lower bound (a.k.a. $I_\text{NCE}(\mathbf{v_1}; \mathbf{v_2})$) on $I(\mathbf{v_1}; \mathbf{v_2})$, i.e., $I(\mathbf{v_1}; \mathbf{v_2}) \geq \log(K) - \mathcal{L}_\text{NCE} = I_\text{NCE}(\mathbf{v_1}; \mathbf{v_2})$. 
In practice, $\mathbf{v_1}$ and $\mathbf{v_2}$ are two views of the data $\mathbf{x}$, such as different augmentations of the same image~\cite{wu2018unsupervised,bachman2019learning,he2019momentum,chen2020improved,chen2020simple}, different image channels~\cite{tian2019contrastive}, or video and text pairs~\cite{sun2019contrastive,miech2019end,li2020learning}. The score function $h(\cdot,\cdot)$ typically consists of two encoders ($f_1$ for $\mathbf{v_1}$ and $f_2$ for $\mathbf{v_2}$), which may or may not share parameters depending on whether $\mathbf{v_1}$ and $\mathbf{v_2}$ are from the same domain. The resulting representations are $\mathbf{z_1}=f_1(\mathbf{v_1})$ and $\mathbf{z_2}=f_2(\mathbf{v_2})$ (see Fig.~\ref{fig:schematic1}a).

\begin{definition}\label{def:sufficient}
(Sufficient Encoder) The encoder $f_1$ of $\mathbf{v_1}$ is sufficient in the contrastive learning framework if and only if $I(\mathbf{v_1};\mathbf{v_2}) = I(f_1(\mathbf{v_1});\mathbf{v_2})$.
\end{definition}

\noindent Intuitively, the encoder $f_1$ is sufficient if the amount of information in $\mathbf{v_1}$ about $\mathbf{v_2}$ is lossless during the encoding procedure. In other words, $\mathbf{z_1}$ has kept all the information that the contrastive learning objective requires. Symmetrically, $f_2$ is sufficient if $I(\mathbf{v_1};\mathbf{v_2}) = I(\mathbf{v_1};f_2(\mathbf{v_2}))$.

\begin{definition}\label{def:min_sufficient}
(Minimal Sufficient Encoder) A sufficient encoder $f_1$ of $\mathbf{v_1}$ is minimal if and only if $I(f_1(\mathbf{v_1});\mathbf{v_1}) \leq I(f(\mathbf{v_1});\mathbf{v_1}), \forall\:f$ that is sufficient.
\end{definition}

\noindent Among those encoders which are sufficient, the minimal ones only extract relevant information of the contrastive task and throw away other irrelevant information. This is appealing in cases where the views are constructed in a way that all the information we care about is shared between them. %

\begin{figure}[t]
\centering
\small
\includegraphics[width=1.0\linewidth]{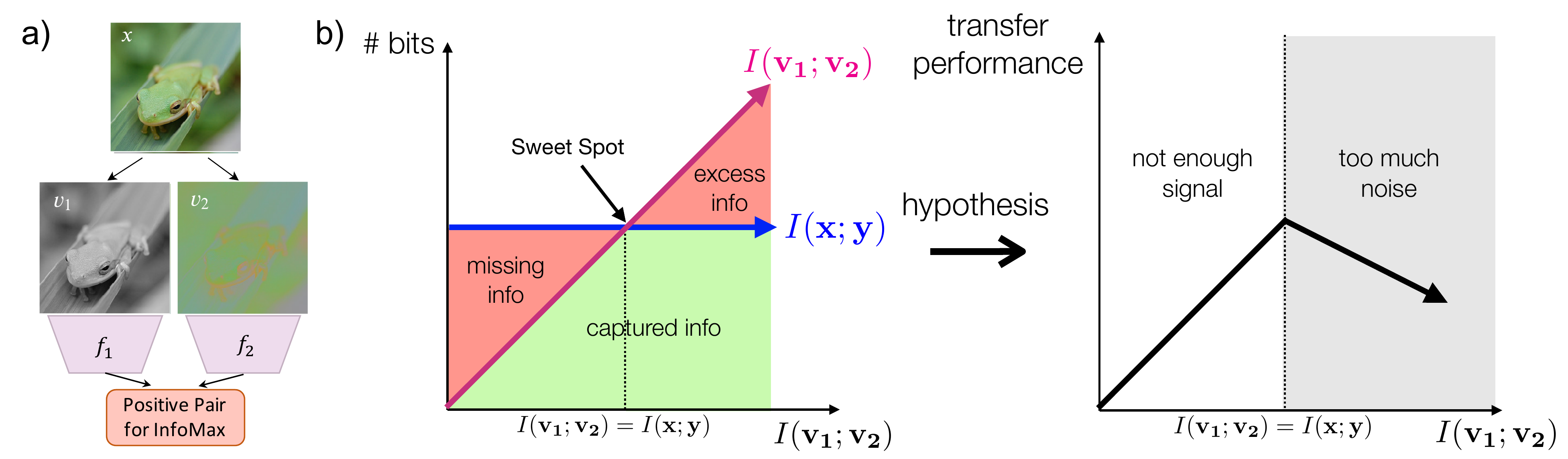}
\caption{\small {\bf(a)} Schematic of multiview contrastive representation learning, where an image is split into two views, and passed through two encoders to learn an embedding where the views are close relative to views from other images. {\bf(b)} When we have views that maximize $I(\mathbf{v_1}; \mathbf{y})$ and $I(\mathbf{v_2}; \mathbf{y})$ (how much task-relevant information is contained) while minimizing $I(\mathbf{v_1}; \mathbf{v_2})$ (information shared between views, including both task-relevant and irrelevant information), there are three regimes: \emph{missing information} which leads to degraded performance due to $I(\mathbf{v_1}; \mathbf{v_2}) < I(\mathbf{x}; \mathbf{y})$; \emph{excess noise} which worsens generalization due to additional noise; \emph{sweet spot} where the only information shared between $\mathbf{v_1}$ and $\mathbf{v_2}$ is task-relevant and such information is complete. }
\vspace{-10pt}
\label{fig:schematic1}
\end{figure}

The representations learned in the contrastive framework are typically used in a separate downstream task. To characterize what representations are good for a downstream task, we define the optimality of representations. To make notation simple, we use $\mathbf{z}$ to mean it can be either $\mathbf{z_1}$ or $\mathbf{z_2}$.
\begin{definition}\label{def:optimal_rep_for_task}
(Optimal Representation of a Task) For a task $\mathcal{T}$ whose goal is to predict a semantic label $\mathbf{y}$ from the input data $\mathbf{x}$, the optimal representation $\mathbf{z}^*$ encoded from $\mathbf{x}$ is the minimal sufficient statistic with respect to $\mathbf{y}$.
\end{definition}

\noindent This says a model built on top of $\mathbf{z}^*$ has all the information necessary to predict $\mathbf{y}$ as accurately as if it were to access $\mathbf{x}$. Furthermore, $\mathbf{z}^*$ maintains the smallest complexity, i.e., containing no other information besides that about $\mathbf{y}$, which makes it more generalizable~\cite{soatto2014visual}. We refer the reader to \cite{soatto2014visual} for a more in depth discussion about optimal visual representations and minimal sufficient statistics.%

\subsection{Three Regimes of Information Captured}\label{sec:three_phase}

As our representations $\mathbf{z_1}, \mathbf{z_2}$ are built from our views and learned by the contrastive objective with the assumption of minimal sufficient encoders, the amount and type of information shared between $\mathbf{v_1}$ and $\mathbf{v_2}$ (i.e., $I(\mathbf{v_1};\mathbf{v_2})$) determines how well we perform on downstream tasks. As in information bottleneck \cite{tishby2000information}, we can trace out a tradeoff between how much information our views share about the input, and how well our learned representation performs at predicting $\mathbf{y}$ for a task. Depending on how our views are constructed, we may find that we are keeping too many irrelevant variables while discarding relevant variables,
leading to suboptimal performance on the information plane. Alternatively, we can find the views that maximize $I(\mathbf{v_1}; \mathbf{y})$ and $I(\mathbf{v_2}; \mathbf{y})$ (how much information is contained about the task label) while minimizing $I(\mathbf{v_1}; \mathbf{v_2})$ (how much information is 
shared about the input, including both task-relevant and irrelevant information). Even in the case of these optimal traces, there are three regimes of performance we can consider that are depicted in Fig.~\ref{fig:schematic1}b, and have been discussed previously in information bottleneck literature \cite{tishby2000information,alemi2016deep,fischer2020conditional}:
\vspace{-3pt}
\begin{enumerate}[leftmargin=0.5cm]
    \item \emph{Missing information}: When $I(\mathbf{v_1}; \mathbf{v_2}) < I(\mathbf{x}; \mathbf{y})$, there is information about the task-relevant variable that is discarded by the view, degrading performance.
    \item \emph{Sweet spot}: When $I(\mathbf{v_1}; \mathbf{y}) = I(\mathbf{v_2}; \mathbf{y}) = I(\mathbf{v_1}; \mathbf{v_2}) = I(\mathbf{x}; \mathbf{y})$, the only information shared between $\mathbf{v_1}$ and $\mathbf{v_2}$ is task-relevant, and there is no irrelevant noise. 
    \item \emph{Excess noise}: As we increase the amount of information shared in the views beyond $I(\mathbf{x}; \mathbf{y})$, we begin to include additional information that is irrelevant for the downstream task. This can lead to worse generalization on the downstream task \cite{alemi2016deep,shamir2010learning}.
\end{enumerate}
\vspace{-3pt}

We hypothesize that the best performing views will be close to the sweet spot: containing as much task-relevant information while discarding as much irrelevant information in the input as possible.  More formally, the following InfoMin proposition articulates which views are optimal supposing that we know the specific downstream task $\mathcal{T}$ in advance. The proof is in Section A.2 of the Appendix.
\begin{prop}\label{theory:infomin}
Suppose $f_1$ and $f_2$ are minimal sufficient encoders. Given a downstream task $\mathcal{T}$ with label $\mathbf{y}$, the optimal views created from the data $\mathbf{x}$ are $(\mathbf{v_1}^*, \mathbf{v_2}^*) = \argmin_{\mathbf{v_1}, \mathbf{v_2}} I(\mathbf{v_1}; \mathbf{v_2})$, subject to $I(\mathbf{v_1}; \mathbf{y}) = I(\mathbf{v_2}; \mathbf{y}) = I(\mathbf{x}; \mathbf{y})$. Given $\mathbf{v_1}^*, \mathbf{v_2}^*$, the representation $\mathbf{z_1^*}$ (or $\mathbf{z_2^*}$) learned by contrastive learning is optimal for $\mathcal{T}$ (Def~\ref{def:optimal_rep_for_task}), thanks to the minimality and sufficiency of $f_1$ and $f_2$.
\end{prop}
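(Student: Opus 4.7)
The plan is to deduce that the learned representation $\mathbf{z_1^*} = f_1(\mathbf{v_1^*})$ is a minimal sufficient statistic of $\mathbf{x}$ for $\mathbf{y}$, which by Definition~\ref{def:optimal_rep_for_task} is exactly what it means for $\mathbf{z_1^*}$ to be optimal for $\mathcal{T}$. I would organise the argument into four short steps: pin down the optimal value of the objective, characterise the joint distribution at the minimum, then deduce sufficiency and minimality of $\mathbf{z_1^*}$ in turn.

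First, I would compute $\min I(\mathbf{v_1}; \mathbf{v_2})$ subject to the constraints. In the usual classification setup $\mathbf{y}$ is a deterministic function of $\mathbf{x}$, so $I(\mathbf{x}; \mathbf{y}) = H(\mathbf{y})$, and $I(\mathbf{v_i}; \mathbf{y}) = H(\mathbf{y})$ upgrades to saying $\mathbf{y}$ is a deterministic function of each view. Data processing then yields
\[
I(\mathbf{v_1}; \mathbf{v_2}) \;\geq\; I(\mathbf{y}; \mathbf{y}) \;=\; H(\mathbf{y}) \;=\; I(\mathbf{x}; \mathbf{y}),
\]
and this bound is attained by the feasible choice $\mathbf{v_1} = \mathbf{v_2} = \mathbf{y}$, so $I(\mathbf{v_1^*}; \mathbf{v_2^*}) = I(\mathbf{x}; \mathbf{y})$.

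Second, I would extract that all shared information at the optimum is task-relevant. Because $\mathbf{y}$ is a function of $\mathbf{v_2^*}$, $I(\mathbf{v_1^*}; \mathbf{y} \mid \mathbf{v_2^*}) = 0$, so the interaction-information identity gives $I(\mathbf{v_1^*}; \mathbf{v_2^*}; \mathbf{y}) = I(\mathbf{v_1^*}; \mathbf{y}) = I(\mathbf{x}; \mathbf{y})$, which combined with Step 1 forces $I(\mathbf{v_1^*}; \mathbf{v_2^*} \mid \mathbf{y}) = 0$. Sufficiency of $\mathbf{z_1^*}$ for $\mathbf{y}$ then follows by a pinching argument: data processing gives $I(\mathbf{z_1^*}; \mathbf{y}) \leq I(\mathbf{v_1^*}; \mathbf{y}) = I(\mathbf{x}; \mathbf{y})$ on one side, and on the other, Definition~\ref{def:sufficient} plus data processing give $I(\mathbf{z_1^*}; \mathbf{v_2^*}) = I(\mathbf{x}; \mathbf{y})$ and $I(\mathbf{z_1^*}; \mathbf{v_2^*} \mid \mathbf{y}) = 0$, so
\[
I(\mathbf{z_1^*}; \mathbf{y}) \;\geq\; I(\mathbf{z_1^*}; \mathbf{v_2^*}; \mathbf{y}) \;=\; I(\mathbf{z_1^*}; \mathbf{v_2^*}) \;=\; I(\mathbf{x}; \mathbf{y}).
\]
For minimality, data processing bounds $I(g(\mathbf{v_1^*}); \mathbf{v_1^*}) \geq I(g(\mathbf{v_1^*}); \mathbf{v_2^*}) = I(\mathbf{v_1^*}; \mathbf{v_2^*})$ for any sufficient $g$, so Definition~\ref{def:min_sufficient} pins $I(\mathbf{z_1^*}; \mathbf{v_1^*}) = I(\mathbf{v_1^*}; \mathbf{v_2^*}) = H(\mathbf{y})$. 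Since $f_1$ is deterministic this is $H(\mathbf{z_1^*}) = H(\mathbf{y})$, and combined with the sufficiency just shown ($\mathbf{y}$ is a function of $\mathbf{z_1^*}$) this forces $\mathbf{z_1^*}$ to be in bijection with $\mathbf{y}$, hence the minimal sufficient statistic of $\mathbf{x}$ for $\mathbf{y}$.

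The most delicate point is Step 1's lower bound: the shortcut uses that $\mathbf{y}$ is a deterministic function of $\mathbf{x}$, so $I(\mathbf{v_i}; \mathbf{y}) = I(\mathbf{x}; \mathbf{y})$ upgrades to determinism of $\mathbf{y}$ given each view; in a fully stochastic $p(\mathbf{y} \mid \mathbf{x})$ setting one needs a more careful Markov-chain argument to lower bound $I(\mathbf{v_1}; \mathbf{v_2})$ by $I(\mathbf{x}; \mathbf{y})$. A secondary subtlety is uniqueness: the argmin need not be a single view pair, but the proof really shows that \emph{every} minimiser yields a $\mathbf{z_1^*}$ bijective with $\mathbf{y}$, all equally optimal.
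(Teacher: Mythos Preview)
Your proof is correct under the deterministic-label assumption you flag, and its four-part structure (optimal value, conditional independence at the optimum, sufficiency, minimality) mirrors the paper's three-proposition breakdown in the Appendix. The key difference is Step~1: the paper avoids determinism of $\mathbf{y}$ given $\mathbf{x}$ entirely. Using only that $\mathbf{v_1},\mathbf{v_2}$ are functions of $\mathbf{x}$ and the constraint $I(\mathbf{v_1};\mathbf{y})=I(\mathbf{x};\mathbf{y})$, it obtains $I(\mathbf{y};\mathbf{v_1},\mathbf{v_2})=I(\mathbf{y};\mathbf{x})$ and hence $I(\mathbf{y};\mathbf{v_2}\mid\mathbf{v_1})=0$ by the chain rule; then
\[
I(\mathbf{v_1};\mathbf{v_2}) \;=\; I(\mathbf{v_1};\mathbf{v_2}) + I(\mathbf{y};\mathbf{v_2}\mid\mathbf{v_1}) \;=\; I(\mathbf{v_2};\mathbf{v_1},\mathbf{y}) \;=\; I(\mathbf{v_2};\mathbf{y}) + I(\mathbf{v_2};\mathbf{v_1}\mid\mathbf{y}) \;\ge\; I(\mathbf{x};\mathbf{y}),
\]
which is exactly the ``more careful Markov-chain argument'' you anticipated. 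Conversely, your pinching argument for sufficiency of $\mathbf{z_1^*}$ is considerably tighter than the paper's: the paper expands $I(\mathbf{y};\mathbf{v_1}\mid\mathbf{z_1})$ through a long chain of interaction-information identities to drive it to zero, whereas you get there in two lines of data processing plus $I(\mathbf{z_1^*};\mathbf{v_2^*}\mid\mathbf{y})\le I(\mathbf{v_1^*};\mathbf{v_2^*}\mid\mathbf{y})=0$. For minimality, both arguments lower-bound $I(\mathbf{z_1};\mathbf{v_1})$ by $I(\mathbf{v_1};\mathbf{y})$ and appeal to achievability; the paper stops at $I(\mathbf{z_1};\mathbf{v_1}\mid\mathbf{y})=0$ (the standard conditional-independence characterisation of a minimal sufficient statistic, valid for stochastic $\mathbf{y}$), while your bijection-with-$\mathbf{y}$ conclusion is the sharper statement available only in the deterministic case. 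Net: your route is cleaner but narrower; the paper's is heavier but handles stochastic labels throughout.
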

Unlike in information bottleneck, for contrastive learning we often do not have access to a fully-labeled training set that specifies the downstream task in advance, and thus evaluating how much task-relevant information is contained in the views and representation at training time is challenging. Instead, the construction of views has typically been guided by domain knowledge that alters the input while preserving the task-relevant variable.

\subsection{View Selection Influences Mutual Information and Accuracy}\label{sec:view_selection}

\begin{figure}[t]
\centering
\includegraphics[width=0.9\linewidth]{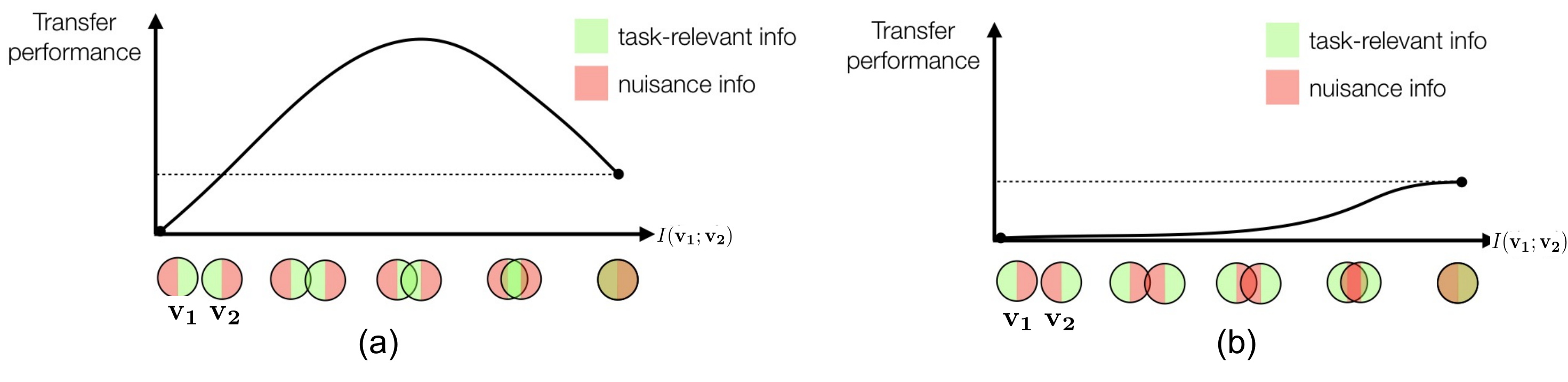}
\vspace{-10pt}
\caption{\small As the mutual information between views is changed, information about the downstream task (green) and nuisance variables (red) can be selectively included or excluded, biasing the learned representation. {\bf (a)} depicts a scenario where views are chosen to preserve downstream task information between views while throwing out nuisance information, while in {\bf (b)} reducing MI always throws out information relevant for the task leading to decreasing performance as MI is reduced.}
\label{fig:view_transfer}
\end{figure}
 
The above analysis suggests that transfer performance will be upper-bounded by a reverse-U shaped curve (Fig.~\ref{fig:schematic1}b, right), with the sweet spot at the top of the curve. In theory, when the mutual information between views is changed, information about the downstream task and nuisance variables can be selectively included or excluded, biasing the learned representation, as shown in Fig.~\ref{fig:view_transfer}. The upper-bound reverse-U might not be reached if views are selected that share noise rather than signal. But practically, a recent study~\cite{tian2019contrastive} suggests that the reverse-U shape is quite common. Here we show several examples where reducing $I(\mathbf{v_1}; \mathbf{v_2})$ improves downstream accuracy. We use $I_\text{NCE}$ as a neural proxy for $I$, and note it depends on network architectures. Therefore for each plot in this paper, we only vary the input views while keeping other settings the same, to make the results comparable.

\begin{figure}[ht]
\centering
\includegraphics[width=0.9\linewidth]{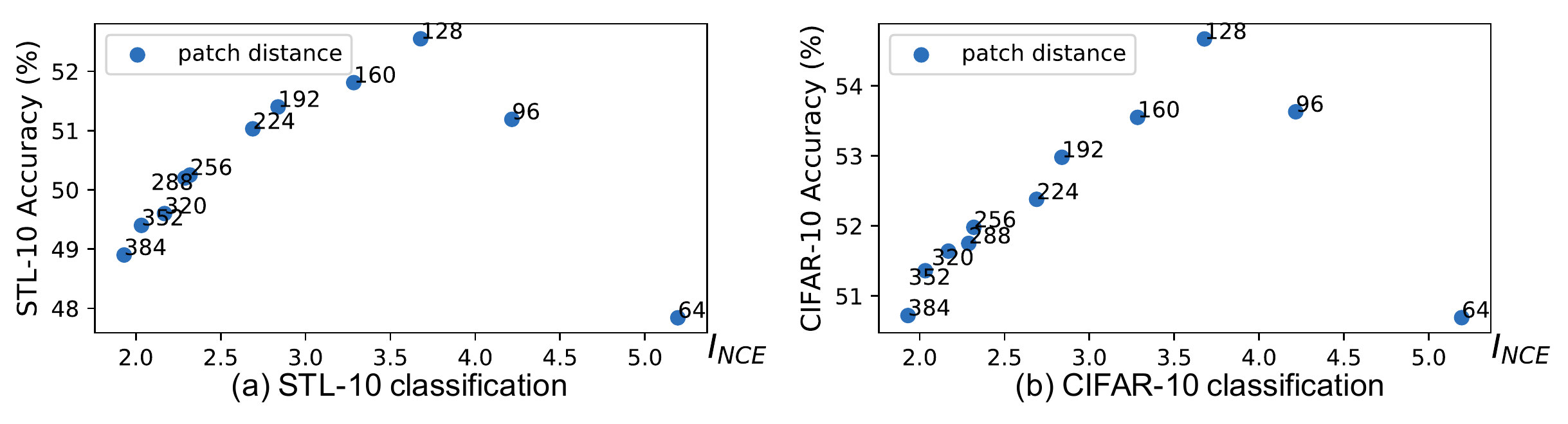}
\vspace{-7pt}
\caption{\small We create views by using pairs of image patches at various offsets from each other. As $I_{NCE}$ is reduced, the downstream task accuracy firstly increases and then decreases, leading to a reverse-U shape.}
\label{fig:spatial}
\end{figure}
\label{sec:spatial}

\textbf{Example 1:} Reducing $I(\mathbf{v_1}; \mathbf{v_2})$ with spatial distance. We create views by randomly cropping two patches of size 64x64 from the same image with various offsets. Namely, one patch starts at position $(x, y)$ while the other starts at $(x + d, y + d)$, with $(x, y)$ randomly generated. We increase $d$ from $64$ to $384$, and sample patches from inside high resolution images in the DIV2K dataset~\cite{agustsson2017ntire}. After contrastive training stage, we evaluate on STL-10 and CIFAR-10 by freezing the encoder and training a linear classifier. The plots in Fig.~\ref{fig:spatial} shows the \emph{Mutual Information} v.s. \emph{Accuracy}. The results show that the reverse-U curve is consistent across both STL-10 and CIFAR-10. We can identify the sweet spot at $d=128$. More details are provided in Appendix.

\begin{figure}[ht]
\centering
\includegraphics[width=0.9\linewidth]{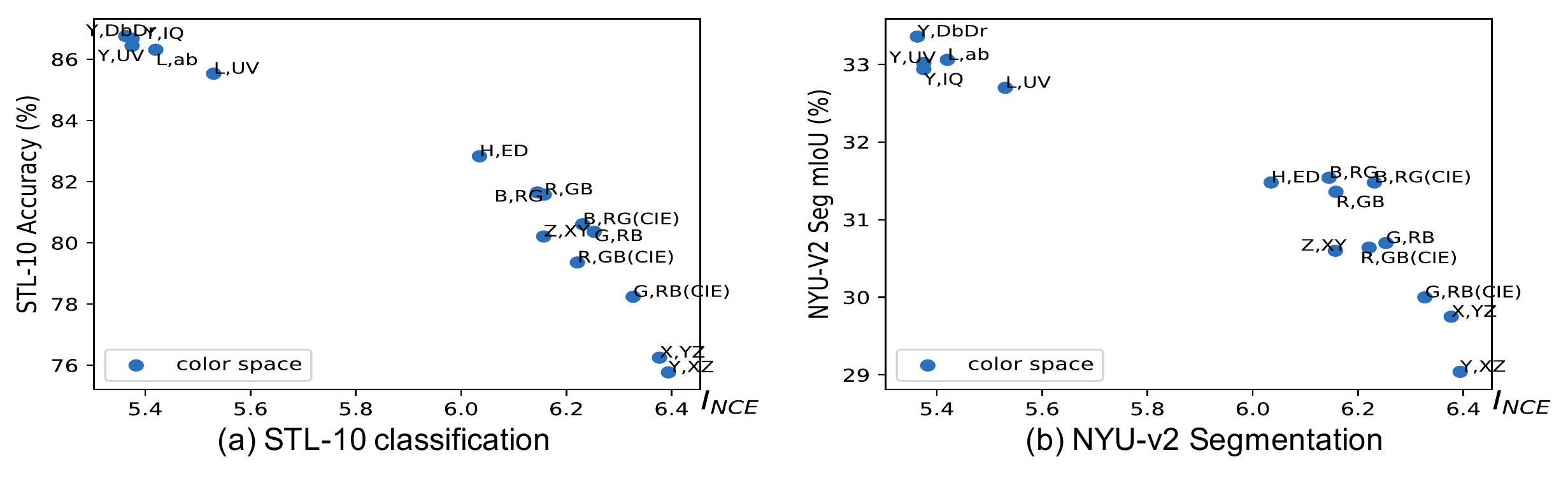}
\vspace{-7pt}
\caption{\small We build views by splitting channels of different color spaces. As $I_\text{NCE}$ decreases, the accuracy on downstream tasks (STL-10 classification, NYU-v2 segmentation) improves.}
\label{fig:color}
\end{figure}

\textbf{Example 2:} Reducing $I(\mathbf{v_1}; \mathbf{v_2})$ with different color spaces. The correlation between channels may vary significantly across different color spaces. We follow~\cite{tian2019contrastive,zhang2017split} to split each color space into two views, such as $\{Y,DbDr\}$ and $\{R, GB\}$. We perform contrastive learning on STL-10, and measure the representation quality by linear classification accuracy on the STL-10 and segmentation performance on NYU-V2~\cite{Silberman:ECCV12} images. As shown in Fig.~\ref{fig:color}, the downstream performance keeps increasing as $I_\text{NCE}$ decreases for both classification and segmentation.  Here we do not observe the the left half of the reverse U-shape, but in Sec.~\ref{sec:learn} we will show a learning method that generates color spaces which reveal the full shape and touch the sweet spot.

\subsection{Data Augmentation to Reduce Mutual Information between Views}

Multiple views can also be generated through augmenting an input in different ways. We can unify several recent contrastive learning methods through the perspective of view generation: despite differences in architecture, objective, and engineering tricks, all recent contrastive learning methods create two views $\mathbf{v_1}$ and $\mathbf{v_2}$ that implicitly follow the InfoMin principle. Below, we consider several recent works in this framework:

\textbf{InstDis~\cite{wu2018unsupervised} and MoCo~\cite{he2019momentum}.} These two methods create views by applying a stochastic data augmentation function twice to the same input: (1) sample an image $X$ from the empirical distribution ${p}(x)$; (2) sample two independent transformations $t_1, t_2$ from a distribution of data augmentation functions $\mathbb{T}$; (3) let $\mathbf{v}_1=t_1(X)$ and $\mathbf{v}_2=t_2(X)$.

\textbf{CMC~\cite{tian2019contrastive}.} CMC further split images across color channels such that $\mathbf{v}_1^\text{cmc}$ is the first color channel of $\mathbf{v}_1$, and $\mathbf{v}_2^\text{cmc}$ is the last two channels of $\mathbf{v}_2$. By this design, $I(\mathbf{v}_1^\text{cmc};\mathbf{v}_2^\text{cmc}) \leq I(\mathbf{v}_1;\mathbf{v}_2)$ is theoretically guaranteed, and we observe that CMC performs better than InstDis.

\textbf{PIRL~\cite{misra2019self}.} PIRL keeps $\mathbf{v}_1^\text{pirl}=\mathbf{v}_1$ but transforms the other view $\mathbf{v}_2$ with random JigSaw shuffling $h$ to get $\mathbf{v}_2^\text{pirl}=h(\mathbf{v}_2)$. Similary we have $I(\mathbf{v}_1^\text{pirl};\mathbf{v}_2^\text{pirl}) \leq I(\mathbf{v}_1;\mathbf{v}_2)$ as $h(\cdot)$ introduces randomness.

\textbf{SimCLR ~\cite{chen2020simple}.} Despite other engineering techniques and tricks, SimCLR uses a stronger class of augmentations $\mathbb{T}'$, which leads to smaller mutual information between the two views than InstDis.

\textbf{CPC~\cite{oord2018representation}.} Different from the above methods that create views at the image level, CPC gets views $\mathbf{v}_1^\text{cpc}$, $\mathbf{v}_2^\text{cpc}$ from local patches with strong data augmentation (e.g., RA~\cite{cubuk2019randaugment}) which results in smaller $I(\mathbf{v}_1^\text{cpc};\mathbf{v}_2^\text{cpc})$. As in Sec.~\ref{sec:spatial}, cropping views from disjoint patches also reduces $I(\mathbf{v}_1^\text{cpc};\mathbf{v}_2^\text{cpc})$.

\begin{figure}[t]
\centering
\includegraphics[width=1.0\linewidth]{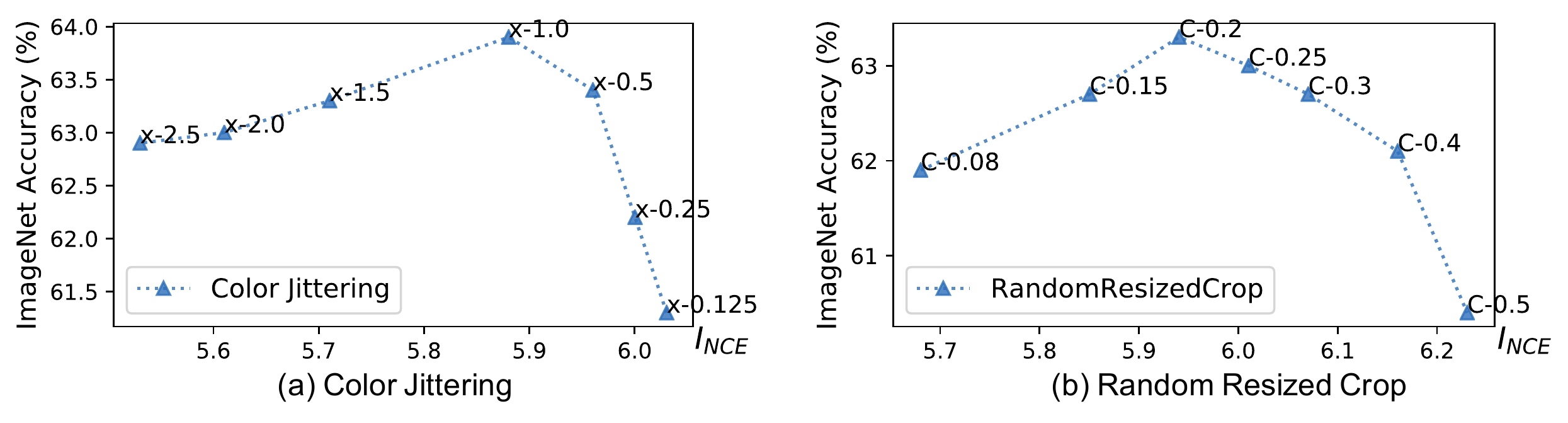}
\caption{The reverse U-shape traced out by parameters of individual augmentation functions.}
\vspace{-10pt}
\label{fig:parametric}
\end{figure}

\begin{table}[h]
\small
\centering
\vspace{-10pt}
\caption{\small Single-crop ImageNet accuracies (\%) of linear classifiers~\protect\cite{zhang2016colorful} trained on representations learned with different contrastive methods using ResNet-50~\protect\cite{he2016deep}. InfoMin Aug. refers to data augmentation using \protect\emph{RandomResizedCrop}, \protect\emph{Color Jittering}, \protect\emph{Gaussian Blur}, \protect\emph{RandAugment}, \protect\emph{Color Dropping}, and a \emph{JigSaw} branch as in PIRL~\protect\cite{misra2019self}. * indicates splitting the network into two halves.}
\label{tab:imagenet}
\vspace{5pt}
\begin{tabular}{llccccc}
\toprule
Method & Architecture        & Param. & Head & Epochs & Top-1 & Top-5 \\ \midrule
InstDis~\cite{wu2018unsupervised}   & ResNet-50    & 24  & Linear & 200  & 56.5  & -    \\
Local Agg.~\cite{zhuang2019local}   & ResNet-50    & 24  & Linear & 200  & 58.8  & -     \\
CMC~\cite{tian2019contrastive}      & ResNet-50*   & 12  & Linear & 240  & 60.0  & 82.3     \\
MoCo~\cite{he2019momentum}          & ResNet-50    & 24  & Linear & 200  & 60.6  & -     \\
PIRL~\cite{misra2019self}           & ResNet-50    & 24  & Linear & 800  & 63.6  & -  \\
CPC v2~\cite{henaff2019data}        & ResNet-50    & 24  & - &  -   & 63.8  & 85.3  \\
SimCLR~\cite{chen2020simple}        & ResNet-50    & 24   & MLP & 1000 & 69.3    & 89.0    \\
\midrule
InfoMin Aug. (Ours)& ResNet-50   & 24  & MLP    & 200 & 70.1 & 89.4 \\
InfoMin Aug. (Ours)& ResNet-50   & 24  & MLP    & 800 & \textbf{73.0} & \textbf{91.1}\\
\bottomrule
\vspace{-10pt}
\end{tabular}
\end{table}

Besides, we also analyze how changing the magnitude parameter of individual augmentation functions trances out reverse-U shapes. We consider \emph{RandomResizedCrop} and \emph{Color Jittering}. For the former, a parameter \texttt{c} sets a low-area cropping bound, and smaller \texttt{c} indicates stronger augmentation. For the latter, a parameter \texttt{x} is adopted to control the strengths. The plots on ImageNet~\cite{deng2009imagenet} are shown in Fig.~\ref{fig:parametric}, where we identify a sweet spot at $1.0$ for \emph{Color Jittering} and $0.2$ for \emph{RandomResizedCrop}.

Motivated by the InfoMin principle, we propose a new set of data augmentation, called \emph{InfoMin Aug}. In combination of the JigSaw strategy proposed in PIRL~\cite{misra2019self}, our InfoMin Aug achieves $73.0\%$ top-1 accuracy on ImageNet linear readout benchmark with ResNet-50, outperforming SimCLR~\cite{chen2020simple} by nearly $4
\%$, as shown in Table~\ref{tab:imagenet}. Besides, we also found that transferring our unsupervisedly pre-trained models to PASCAL VOC object detection and COCO instance segmentation consistently outperforms supervised ImageNet pre-training. More details and results are in Appendix.

\begin{table}[t]
\small
\centering
\vspace{-15pt}
\caption{\small \label{tbl:coco_r50_fpn} Results of object detection and instance segmentation fine-tuned on COCO. We adopt Mask R-CNN \textbf{R50-FPN}, and report the bounding box AP and mask AP on \texttt{val2017}. In the brackets are the gaps to the ImageNet supervised pre-training counterpart. For fair comparison, InstDis~\protect\cite{wu2018unsupervised}, PIRL~\protect\cite{misra2019self}, MoCo~\protect\cite{he2019momentum}, and InfoMin are all pre-trained for \textbf{200} epochs. In green are the gaps of at least {\fontsize{8pt}{1em}\selectfont \hl{${+}$\textbf{0.5}}} point.}
\subfloat[Mask R-CNN, R50-FPN, \textbf{1$\x$} schedule]{
\begin{tabular}{c|ccc|ccc}
\toprule
pre-train &
\apbbox{~} &
\apbbox{50} &
\apbbox{75} &
\apmask{~} &
\apmask{50} &
\apmask{75} \\
\hline
\color{Gray}random init.       & \color{Gray}32.8 & \color{Gray}50.9 & \color{Gray}35.3 & \color{Gray}29.9 & \color{Gray}47.9 & \color{Gray}32.0 \\
supervised        & 39.7 & 59.5 & 43.3 & 35.9 & 56.6 & 38.6 \\
\hline
InstDis~\cite{wu2018unsupervised}             & 
38.8\cgap{-}{0.9} & 58.4\cgap{-}{1.1} & 42.5\cgap{-}{0.8} & 35.2\cgap{-}{0.7} & 55.8\cgap{-}{0.8} & 37.8\cgap{-}{0.8}\\
PIRL~\cite{misra2019self}             & 
38.6\cgap{-}{1.1} & 58.2\cgap{-}{1.3} & 42.1\cgap{-}{1.2} & 35.1\cgap{-}{0.8} & 55.5\cgap{-}{1.1} & 37.7\cgap{-}{0.9}\\
MoCo~\cite{he2019momentum}             & 
39.4\cgap{-}{0.3} & 59.1\cgap{-}{0.4} & 42.9\cgap{-}{0.4} & 35.6\cgap{-}{0.3} & 56.2\cgap{-}{0.4} & 38.0\cgap{-}{0.6}\\
MoCo v2~\cite{chen2020improved}        &
40.1\cgap{+}{0.4} & 59.8\cgap{+}{0.3} & 44.1\cgaphl{+}{0.8} & 36.3\cgap{+}{0.4} & 56.9\cgap{+}{0.3} & 39.1\cgaphl{+}{0.5}\\
InfoMin Aug.           & 
40.6\cgaphl{+}{0.9} & 60.6\cgaphl{+}{1.1} & 44.6\cgaphl{+}{1.3} & 36.7\cgaphl{+}{0.8} & 57.7\cgaphl{+}{1.1} & 39.4\cgaphl{+}{0.8} \\
\bottomrule
\end{tabular}
}

\vspace{-5pt}

\subfloat[Mask R-CNN, R50-FPN, \textbf{2$\x$} schedule]{
\begin{tabular}{c|ccc|ccc}
\toprule
pre-train &
\apbbox{~} &
\apbbox{50} &
\apbbox{75} &
\apmask{~} &
\apmask{50} &
\apmask{75} \\
\hline
\color{Gray}random init.       & \color{Gray}38.4 & \color{Gray}57.5 & \color{Gray}42.0 & \color{Gray}34.7 & \color{Gray}54.8 & \color{Gray}37.2 \\
supervised        & 41.6 & 61.7 & 45.3 & 37.6 & 58.7 & 40.4 \\
\hline
InstDis~\cite{wu2018unsupervised}             & 
41.3\cgap{-}{0.3} & 61.0\cgap{-}{0.7} & 45.3\cgap{+}{0.0} & 37.3\cgap{-}{0.3} & 58.3\cgap{-}{0.4} & 39.9\cgap{-}{0.5}\\
PIRL~\cite{misra2019self}             & 
41.2\cgap{-}{0.4} & 61.2\cgap{-}{0.5} & 45.2\cgap{-}{0.1} & 37.4\cgap{-}{0.2} & 58.5\cgap{-}{0.2} & 40.3\cgap{-}{0.1}\\
MoCo~\cite{he2019momentum}               & 
41.7\cgap{+}{0.1} & 61.4\cgap{-}{0.3} & 45.7\cgap{+}{0.4} & 37.5\cgap{-}{0.1} & 58.6\cgap{-}{0.1} & 40.5\cgap{+}{0.1}\\
MoCo v2~\cite{chen2020improved}        &
41.7\cgap{+}{0.1} & 61.6\cgap{-}{0.1} & 45.6\cgap{+}{0.3} & 37.6\cgap{+}{0.0} & 58.7\cgap{+}{0.0} & 40.5\cgap{+}{0.1}\\
InfoMin Aug.           & 
42.5\cgaphl{+}{0.9} & 62.7\cgaphl{+}{1.0} & 46.8\cgaphl{+}{1.5} & 38.4(\cgaphl{+}{0.8} & 59.7\cgaphl{+}{1.0} & 41.4\cgaphl{+}{1.0} \\
\bottomrule
\end{tabular}
}
\vspace{-20pt}
\end{table}

\vspace{-3pt}
One goal of unsupervised pre-training is to learn transferable representations that are beneficial for downstream tasks. The rapid progress of many vision tasks in past years can be ascribed to the paradigm of fine-tuning models that are initialized from supervised pre-training on ImageNet. When transferring to PASCAL VOC~\cite{everingham2010pascal} and COCO~\cite{lin2014microsoft}, we found our InfoMin pre-training consistently outperforms supervised pre-training as well as other unsupervised pre-training methods. 

\textbf{COCO Object Detection/Segmentation.} Feature normalization has been shown to be important during fine-tuning~\cite{he2019momentum}. Therefore, we fine-tune the backbone with Synchronized BN (SyncBN~\cite{peng2018megdet}) and add SyncBN to newly initialized layers (e.g., FPN~\cite{lin2017feature}). Table~\ref{tbl:coco_r50_fpn} reports the bounding box AP and mask AP on \texttt{val2017} on COCO, using the Mask R-CNN~\cite{he2017mask} R50-FPN pipeline. All results are reported on \texttt{Detectron2}~\cite{wu2019detectron2}. 

\vspace{-3pt}

We have tried different popular detection frameworks with various backbones, extended the fine-tuning schedule (e.g., \textbf{6$\x$} schedule), and compared InfoMin ResNeXt-152~\cite{xie2017aggregated} trained on ImageNet-1k with supervised ResNeXt-152 trained on ImageNet-5k (6 times larger than ImageNet-1k). In all cases, InfoMin consistently outperforms supervised pre-training. Please see Section~\ref{sec:coco} for more detailed comparisons.

\begin{table}[t]
\small
\centering
\setlength{\tabcolsep}{10pt}
\vspace{-5pt}
\caption{ \small \label{tab:pascal} Pascal VOC object detection. All contrastive models are pretrained for \textbf{200} epochs on ImageNet for fair comparison. We use Faster R-CNN R50-\textbf{C4} architecture for object detection. APs are reported using the average of 5 runs. * we use numbers from~\protect\cite{he2019momentum} since the setting is exactly the same.}
\begin{tabular}{l|ccc|c}
\toprule
pre-train & AP$_\text{50}$ & AP & AP$_\text{75}$ & ImageNet Acc(\%)\\
\hline
\color{Gray}random init.*        & \color{Gray}60.2 & \color{Gray}33.8 & \color{Gray}33.1 & \color{Gray}{-} \\
supervised*          & 81.3 & 53.5 & 58.8 & 76.1 \\
\hline
InstDis             & 80.9~\showdiffn{(-0.4)} & 55.2~\showdiff{(+1.7)} & 61.2~\showdiff{(+2.4)} & 59.5 \\
PIRL                & 81.0~\showdiffn{(-0.3)} & 55.5~\showdiff{(+2.0)} & 61.3~\showdiff{(+2.5)} & 61.7 \\
MoCo*               & 81.5~\showdiff{(+0.2)} & 55.9~\showdiff{(+2.4)} & 62.6~\showdiff{(+3.8)} & 60.6 \\
MoCo v2             & 82.4~\showdiff{(+1.1)} & 57.0~\showdiff{(+3.5)} & 63.6~\showdiff{(+4.8)} & 67.5 \\
InfoMin Aug. (ours)      & \textbf{82.7}~\showdiff{(+1.4)} & \textbf{57.6}~\showdiff{(+4.1)} & \textbf{64.6}~\showdiff{(+5.8)} & \textbf{70.1} \\
\bottomrule
\end{tabular}
\vspace{-15pt}
\end{table}

\textbf{Pascal VOC Object Detection.} We strictly follow the setting introduced in~\cite{he2019momentum}. Specifically, We use Faster R-CNN~\cite{ren2015faster} with R50-C4 architecture. We fine-tune all layers with 24000 iterations, each consisting of 16 images. The results are reported in Table~\ref{tab:pascal}.

\section{Learning views for contrastive learning}
Hand-designed data augmentation is an effective method for generating views that have reduced mutual information and strong transfer performance for images. However, as contrastive learning is applied to new domains, generating views through careful construction of data augmentation strategies may prove ineffective. Furthermore, the types of views that are useful depend on the downstream task. Here we show the task-dependence of optimal views on a simple toy problem, and propose an unsupervised and semi-supervised learning method to \emph{learn} views from data.
\subsection{Optimal Views Depend on the Downstream Task}
To understand how the choice of views impact the representations learned by contrastive learning, we construct a toy dataset that mixes three tasks. We build our toy dataset by combining \textbf{Moving-MNIST}~\cite{srivastava2015unsupervised} (consisting of videos where digits move inside a black canvas with constant speed and bounce off of image boundaries), with a fixed background image sampled from the STL-10 dataset~\cite{coates2011analysis}. We call this dataset \textbf{Colorful Moving-MNIST}, which consists of three factors of variation in each frame: \emph{the class of the digit}, \emph{the position of the digit}, and \emph{the class of background image} (see Appendix for more details). Here we analyze how the choice of views impacts which of these factors are extracted by contrastive learning.

\begin{figure}[t]
\centering
\includegraphics[width=0.85\linewidth]{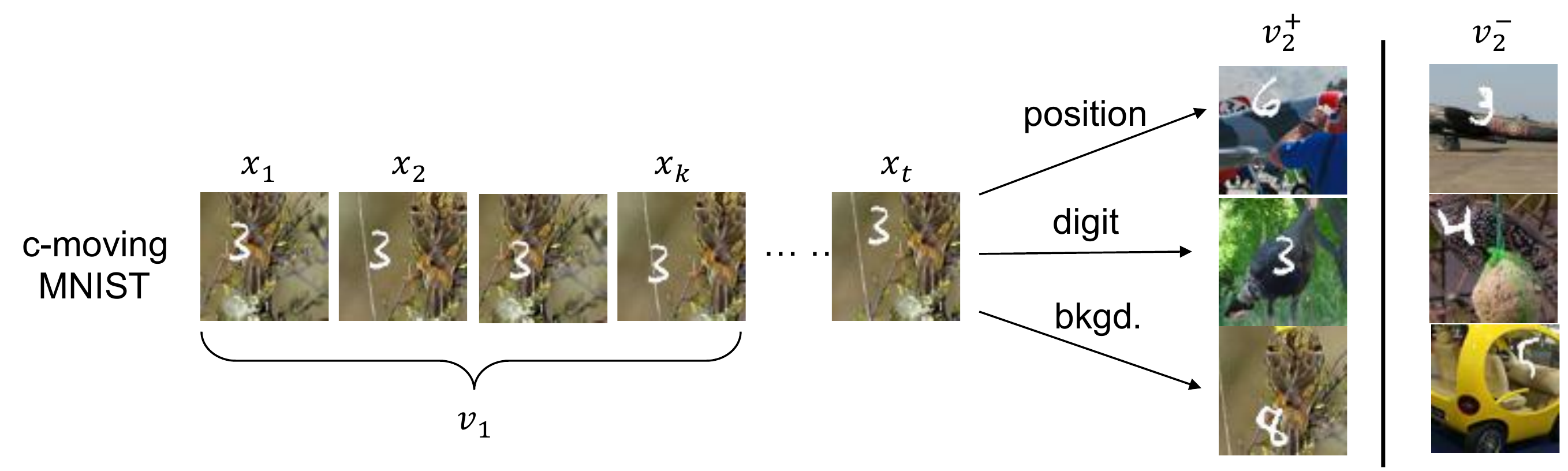}
\caption{\small Illustration of the Colorful-Moving-MNIST dataset. In this example, the first view $\mathbf{v_1}$ is a sequence of frames containing the moving digit, e.g., $\mathbf{v_1}=x_{1:k}$. The matched second view $\mathbf{v_2^+}$ share some factor with $x_t$ that $\mathbf{v_1}$ can predict, while the unmatched view $\mathbf{v_2^-}$ does not share factor with $x_t$.}
\vspace{-10pt}
\label{fig:mnist}
\end{figure}

\begin{table}[t]
    \setlength{\tabcolsep}{4pt}
	\centering
	\small
	\caption{\small We study how information shared by views $I(\mathbf{v_1};\mathbf{v_2})$ would affect the representation quality, by evaluating on three downstream tasks: digit classification, localization, and background (STL-10) classification. Evaluation for contrastive methods is performed by freezing the backbone and training a linear task-specific head}
    \begin{tabular}{cc|ccc}
    \toprule
    & $I(\mathbf{v_1};\mathbf{v_2})$ & \tabincell{c}{digit cls.  error rate (\%)} & \tabincell{c}{background cls.  error rate (\%)} & \tabincell{c}{digit loc.  error pixels} \\
    \hline
    \multirow{3}{*}{\tabincell{c}{Single \\ Factor}}& \emph{digit} &        
    \textbf{16.8} & 88.6 & 13.6\\
    & \emph{bkgd} &   88.6 & \textbf{51.7} & 16.1\\
    & \emph{pos} &     57.9 & 87.6 & \textbf{3.95}\\
    \hline
    \multirow{4}{*}{\tabincell{c}{Multiple \\ Factors}}& \emph{bkgd, digit, pos} &        88.8 & 56.3 & 16.2\\
    & \emph{bkgd, digit} &   88.2 & 53.9 & 16.3\\
    & \emph{bkgd, pos}   &   88.8 & 53.8 & 15.9\\
    & \emph{digit, pos}  &   \textbf{14.5} & 88.9 & 13.7\\
    \hline
    \multicolumn{2}{c|}{Supervised} &        3.4 & 45.3 & 0.93\\
    \bottomrule
    \end{tabular}
    \vspace{-10pt}
    \label{tbl:mnist_single}
\end{table}

\textbf{Setup.} We fix view $\mathbf{v_1}$ as the sequence of past frames $\mathbf{x}_{1:k}$. For simplicity, we consider $\mathbf{v_2}$ as a single image, and construct it by referring to frame $\mathbf{x}_{t(t>k)}$. One example of visualization is shown in Fig.~\ref{fig:mnist}, and please refer to Appendix for more details.
We consider 3 downstream tasks for an image: (1) predict the digit class; (2) localize the digit; (3) classify the background image (10 classes from STL-10). This is performed by freezing the backbone and training a linear task-specific head. We also provide a ``supervised'' baseline that is trained end-to-end for comparison. 

\textbf{Single Factor Shared.} We consider the case that $\mathbf{v_1}$ and $\mathbf{v_2}$ only share one of the three factors: \emph{digit}, \emph{position}, or \emph{background}. 
We synthesize $\mathbf{v_2}$ by setting one of the three factors the same as $\mathbf{x}_{t}$ but randomly picking the other two.
In such cases, the mutual information $I(\mathbf{v_1};\mathbf{v_2})$ is either about \emph{digit}, \emph{position}, or \emph{background}. The
results are summarized in Table~\ref{tbl:mnist_single}, which clearly shows that the performance is significantly affected by what is shared between $\mathbf{v_1}$ and $\mathbf{v_2}$. Specifically, if the downstream task is relevant to one factor, $I(\mathbf{v_1};\mathbf{v_2})$ should include that factor rather than others. For example, when $\mathbf{v_2}$ only shares background image with $\mathbf{v_1}$, contrastive learning can hardly learn representations that capture digit class and location. 

\textbf{Multiple Factors Shared.} We further explore how representation quality is changed if $\mathbf{v_1}$ and $\mathbf{v_2}$ share multiple factors. We follow a similar procedure as above to control factors shared by $\mathbf{v_1}$ and $\mathbf{v_2}$, and present the results in Table~\ref{tbl:mnist_single}. We found that one factor can overwhelm another; for instance, whenever \emph{background} is shared, the latent representation leaves out information for discriminating or localizing digits. This might because the information bits of background predominates, and the encoder chooses the background as a ``shortcut'' to solve the contrastive pre-training task. When $\mathbf{v_1}$ and $\mathbf{v_2}$ share \emph{digit} and \emph{position}, the former is preferred over the latter.

\subsection{Synthesizing Views with Invertible Generators}\label{sec:learn}

In this section, we design unsupervised and semi-supervised methods that synthesize novel views following the InfoMin principle. Concretely, we extend the color space experiments in Sec.~\ref{sec:view_selection} by learning flow-based models~\cite{dinh2016density,dinh2014nice,kingma2018glow} that transfer natural color spaces into novel color spaces, from which we split the channels to get views. We still call the output of flow-based models as color spaces because the flows are designed to be pixel-wise and bijective (by its nature), which follows the property of color space conversion. After the views have been learned, we perform standard contrastive learning followed by linear classifier evaluation. %

Practically, the flow-based model $g$ is restricted to pixel-wise 1x1 convolutions and ReLU activations, operating independently on each pixel. We try both volume preserving (VP) and non-volume preserving (NVP) flows. For an input image $X$, the splitting over channels is represented as $\{X_1, X_{2:3}\}$. $\hat{X}$ signifies the transformed image, i.e., $\hat{X}=g(X)$. Experiments are conducted on STL-10, which includes 100k unlabeled and 5k labeled images. More details are in Appendix.

\subsubsection{Unsupervised View Learning: Minimize $I(\mathbf{v_1}; \mathbf{v_2})$}

The idea is to leverage an adversarial training strategy~\cite{goodfellow2014generative}. Given $\hat{X}=g(X)$, we train two encoders $f_1,f_2$ to maximize $I_\text{NCE}(\hat{X}_1;\hat{X}_{2:3})$ as in Eqn.~\ref{eq:infonce_loss}, similar to the discriminator of GAN~\cite{goodfellow2014generative}. Meanwhile, $g$ is adversarially trained to minimize $I_\text{NCE}(\hat{X}_1;\hat{X}_{2:3})$. Formally, the objective is:
\begin{equation}
    \min_{g}\max_{f_1,f_2} I_\text{NCE}^{f_1,f_2}\big(g(X)_1; g(X)_{2:3}\big)
\end{equation}
Alternatively, one may use other MI bounds~\cite{belghazi2018mine,poole2019variational}, but we find $I_\text{NCE}$ works well and keep using it. We note that the invertibility of $g(\cdot)$ prevent it from learning degenerate/trivial solutions.

\begin{figure}[t]
\centering
\includegraphics[width=0.95\linewidth]{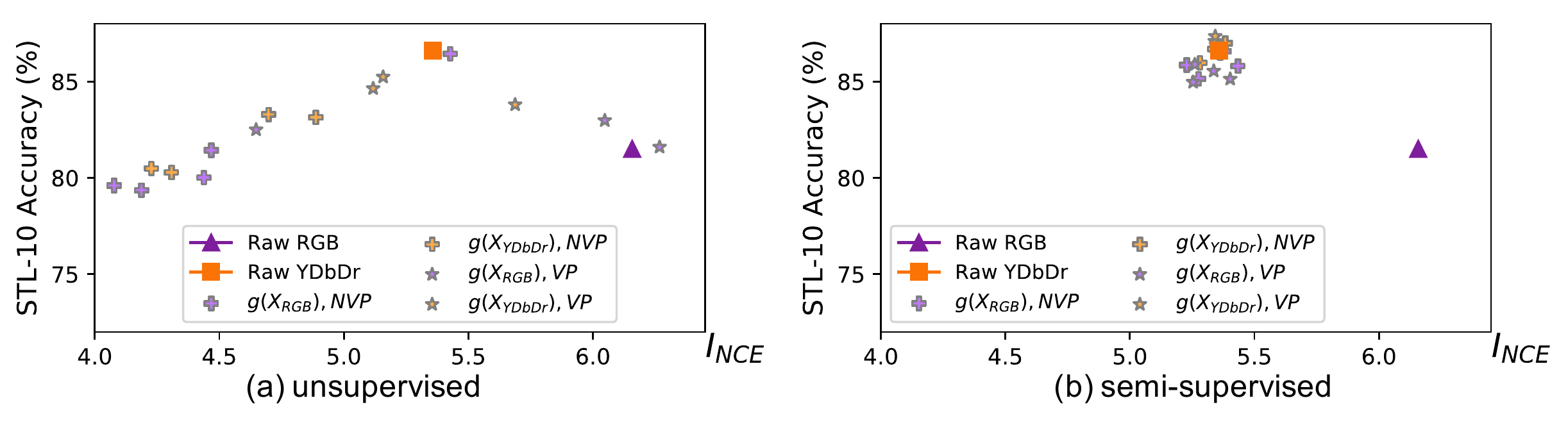}
\caption{View generator learned by (a) unsupervised or (b) semi-supervised objectives.}
\label{fig:learned_color}
\end{figure}

\header{Results.} We experiment with \emph{RGB} and \emph{YDbDr}. As shown in Fig.~\ref{fig:learned_color}(a), a reverse U-shape of $I_\text{NCE}$ and downstream accuracy is present. Interestingly, YDbDr is already near the sweet spot. 
This happens to be in line with our human prior that the ``luminance-chrominance'' decomposition is a good way to decorrelate colors but still retains recognizability of objects.
We also note that another luminance-chrominance decomposition Lab, which performs similarly well to YDbDr (Fig.~\ref{fig:color}), was designed to mimic the way humans perceive color~\cite{jain1989fundamentals}. Our analysis therefore suggests yet another rational explanation for why humans perceive color the way we do -- human perception of color may be near optimal for self-supervised representation learning.

With this unsupervised objective, in most cases $I_\text{NCE}$ between views is overly reduced. In addition, we found this GAN-style training is unstable, as different runs with the same hyper-parameter vary significantly. We conjecture it is because the view generator has no knowledge about the downstream task, and thus the constraint $I(\mathbf{v_1}, \mathbf{y}) = I(\mathbf{v_2}, \mathbf{y}) = I(\mathbf{x}, \mathbf{y})$ in Proposition~\ref{theory:infomin} is heavily broken.
To overcome this, we further develop an semi-supervised view learning method.

\subsubsection{Semi-supervised View Learning: Find Views that Share the Label Information}\label{sec:semi}

We assume a handful of labels for the downstream task are available. Thus we can guide the generator $g$ to retain $I(g(X)_1, \mathbf{y})$ and $I(g(X)_{2:3}, \mathbf{y})$. Practically, we introduce two classifiers on each of the learned views to perform classification during the view learning process. Formally, we optimize:
\begin{equation}
    \min_{g,c_1,c_2}\max_{f_1,f_2} \underbrace{I_\text{NCE}^{f_1,f_2}(g(X)_1; g(X)_{2:3})}_{\text{unsupervised: reduce $I(v_1;v_2)$}} + \underbrace{\mathcal{L}_{ce}(c_1(g(X)_1), y) + \mathcal{L}_{ce}(c_2(g(X)_{2:3}), y)}_{\text{supervised: keep $I(v_1;y)$ and $I(v_2;y)$}}
\end{equation}
where $c_1,c_2$ are the classifiers. The $I_\text{NCE}$ term applies to all data while the latter two are only for labeled data. In each iteration, we sample an unlabeled batch and a labeled batch. After this process is done, we use frozen $g$ to generate views for \emph{unsupervised} contrastive representation learning.

\begin{table}[t]
\centering
\begin{minipage}[c]{0.53\textwidth}
\centering
\small
\caption{\label{tab:generator} \small Comparison of different view generators by measuring STL-10 classification accuracy: \emph{supervised}, \emph{unsupervised}, and \emph{semi-supervised}. ``\# of Images'' indicates how many images are used to learn view generators. In representation learning stage, all 105k images are used.}
\begin{tabular}{l|cc}
\toprule
Method (\# of Images) & RGB & YDbDr \\
\midrule
unsupervised (100k) & 82.4 $\pm$ 3.2 & 84.3 $\pm$ 0.5 \\
supervised (5k) & 79.9 $\pm$ 1.5 & 78.5 $\pm$ 2.3 \\
semi-supervised (105k) & \textbf{86.0 $\pm$ 0.6} & \textbf{87.0 $\pm$ 0.3}\\
\midrule
raw views & 81.5 $\pm$ 0.2 & 86.6 $\pm$ 0.2 \\
\bottomrule
\end{tabular}
\end{minipage}\hfill
\begin{minipage}[c]{0.46\textwidth}
\captionsetup{type=figure}
\includegraphics[width=1.0\linewidth]{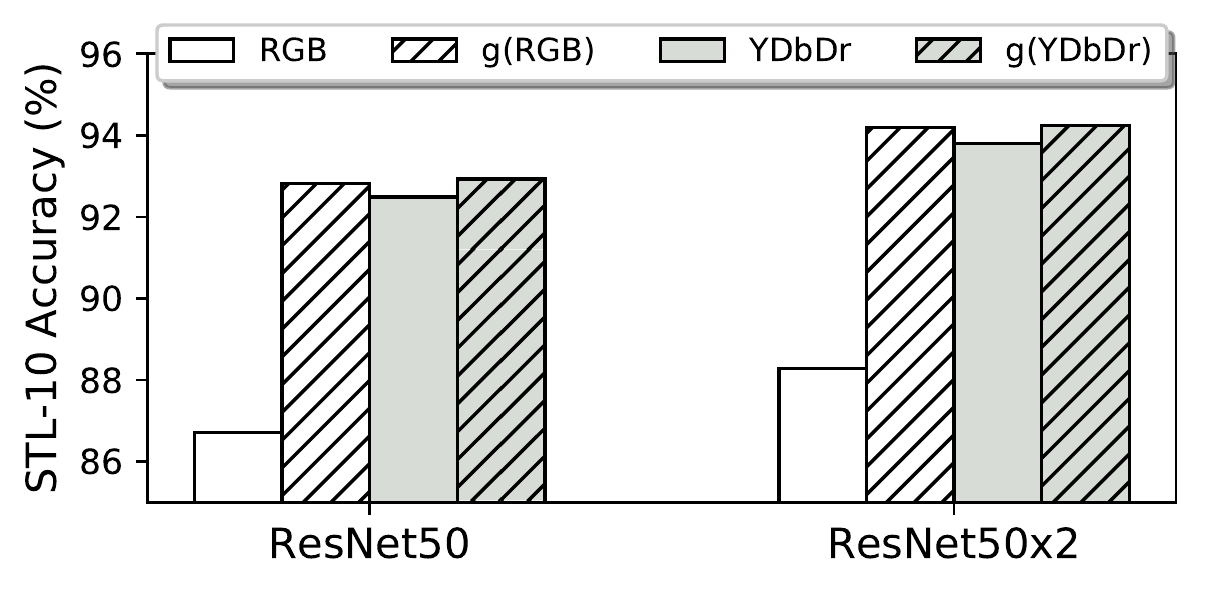}
\vspace{-3pt}
\caption{\label{tab:stl_model} \small Switching to larger backbones with views learned by the semi-supervised method.}
\end{minipage}
\end{table}
\header{Results.} The plots are shown in Figure~\ref{fig:learned_color}(b). Now the learned views are centered around the sweet spot, no matter what the input color space is and whether the generator is VP or NVP, which highlights the importance of keeping information about $\mathbf{y}$. Meanwhile, to see the importance of the unsupervised term, which reduces $I_{NCE}$, we train another view generator with only supervised loss. We further compare ``supervised'', ``unsupervised'' and ``semi-supervised'' (the supervised + unsupervised losses) generators in Table~\ref{tab:generator}, where we also includes contrastive learning over the original color space (``raw views") as a baseline. 
The semi-supervised view generator significantly outperforms the supervised one, validating the importance of reducing $I(\mathbf{v_1};\mathbf{v_2})$. We compare further compare $g(X)$ with $X$ ($X$ is RGB or YDbDr) on larger backbone networks, as shown in Fig.~\ref{tab:stl_model}, We see that the learned views consistently outperform its raw input, e.g., $g(RGB)$ surpasses $RGB$ by a large margin and reaches $94\%$ classification accuracy.

\section{Conclusion}
We have characterized that good views for a given task in contrastive representation learning framework should retain task-relevant information while minimizing irrelevant nuisances, which we call \emph{InfoMin} principle. Based on it, we demonstrate that optimal views are task-dependent in both theory and practice. We further propose a semi-supversied method to learn effective views for a given task. In addition, we analyze the data augmentation used in recent methods from the \emph{InfoMin} perspective, and further propose a new set of data augmentation that achieved a new state-of-the-art top-1 accuracy on ImageNet linear readout benchmark with a ResNet-50.

\clearpage

\section*{Broader Impact}
This paper is on the basic science of representation learning, and we believe it will be beneficial to both the theory and practice of this field. An immediate application of self-supervised representation learning is to reduce the reliance on labeled data for downstream applications. This may have the beneficial effects of being more cost effective and reducing biases introduced by human annotations. At the same time, these methods open up the ability to use uncurated data more effectively, and such data may hide errors and biases that would have been uncovered via the human curation process. We also note that the view constructions we propose are not bias free, even when they do not use labels: using one color space or another may hide or reveal different properties of the data. The choice of views therefore plays a similar role to the choice of training data and training annotations in traditional supervised learning.

\begin{ack}

\noindent \textbf{Acknowledgements.} This work was done when Yonglong Tian was a student researcher at Google.
We thank Kevin Murphy for fruitful and insightful discussion; Lucas Beyer for feedback on related work;
and Google Cloud team for supporting computation resources. Yonglong is grateful to Zhoutong Zhang
for encouragement and feedback on experimental design.

\noindent \textbf{Funding.} Funding for this project was provided Google, as part of Yonglong Tian's role as a student researcher at Google.

\noindent \textbf{Competing interests.} In the past 36 months, Phillip Isola has had employment at MIT, Google, and OpenAI; honorarium for lecturing at the ACDL summer school in Italy; honorarium for speaking at GIST AI Day in South Korea. P.I.’s lab at MIT has been supported by grants from Facebook, IBM, and the US Air Force; start up funding from iFlyTech via MIT; gifts from Adobe and Google; compute credit donations from Google Cloud. Yonglong Tian is a Ph.D. student supported by MIT EECS department. Chen Sun, Ben Poole, Dilip Krishan, and Cordelia Schmid are employees at Google.

\end{ack}

\bibliographystyle{plain}
\bibliography{egbib.bib}

\clearpage
\onecolumn
\begin{center}
{\Large\bf{Appendix: What Makes for Good Views for Contrastive Learning?}\\
}
\end{center}
\maketitle

\appendix
\section{Proof of Proposition 3.1}

In this section, we provide proof for the statement regarding optimal views in proposition 3.1 of the main text. As a warmup, we firstly recap some properties of mutual information.

\subsection{Properties of MI \protect\cite{cover1991entropy}:}
\noindent(1)~Nonnegativity:
\begin{align*}
    I(\x;\y)\geq0; I(\x;\y|\z)\geq0
\end{align*}
\noindent(2)~Chain Rule:
\begin{align*}
    I(\x;\y,\z)=I(\x;\y) + I(\x;\z|\y)
\end{align*}
\noindent(2)~Multivariate Mutual Information:
\begin{align*}
    I(\x_1;\x_2;...;\x_{n+1}) = I(\x_1;...;\x_n) - I(\x_1;...;\x_n|\x_{n+1})
\end{align*}

\subsection{Proof}
\begin{prop}~\label{prop:1}
According to Proposition 1, the optimal views $\v_1^*, \v_2^*$ for task $\mathcal{T}$ with label $\y$, are views such that $I(\v_1^*; \v_2^*)=I(\v_1^*; \y)=I(\v_2^*; \y)=I(\x; \y)$
\end{prop}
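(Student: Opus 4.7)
The plan is to prove both halves of the characterization. First, that any pair of views satisfying the constraints $I(\v_1;\y)=I(\v_2;\y)=I(\x;\y)$ of Proposition~1 must have $I(\v_1;\v_2)\ge I(\x;\y)$, so the minimum is at least $I(\x;\y)$; second, that this lower bound is attained, so that the optimal views achieve $I(\v_1^*;\v_2^*)=I(\x;\y)$ exactly. Throughout, I will use the fact that each view is a (possibly stochastic) function of $\x$, which gives the Markov structure $(\v_1,\v_2)-\x-\y$.

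For the lower bound, I would expand $I(\v_1;\v_2,\y)$ via the chain rule in two ways,
\[
I(\v_1;\y)+I(\v_1;\v_2\mid\y)\;=\;I(\v_1;\v_2,\y)\;=\;I(\v_1;\v_2)+I(\v_1;\y\mid\v_2),
\]
rearranging to $I(\v_1;\v_2)=I(\v_1;\y)+I(\v_1;\v_2\mid\y)-I(\v_1;\y\mid\v_2)$. To eliminate the last term I would exploit the second constraint: expanding $I(\x,\v_2;\y)=I(\v_2;\y)+I(\x;\y\mid\v_2)=I(\x;\y)+I(\v_2;\y\mid\x)$ and noting that $I(\v_2;\y\mid\x)=0$ by the Markov chain $\v_2-\x-\y$ gives $I(\x;\y\mid\v_2)=I(\x;\y)-I(\v_2;\y)=0$. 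The conditional data-processing inequality along $\v_1-\x-\y$ given $\v_2$ then yields $I(\v_1;\y\mid\v_2)\le I(\x;\y\mid\v_2)=0$. Substituting, using the first constraint $I(\v_1;\y)=I(\x;\y)$, and the non-negativity of $I(\v_1;\v_2\mid\y)$, I obtain $I(\v_1;\v_2)\ge I(\x;\y)$, with equality iff $\v_1\perp\v_2\mid\y$.

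For achievability, a clean realization of the bound is $\v_1=\v_2=T(\x)$ where $T$ is a minimal sufficient statistic of $\x$ for $\y$ (e.g., $T=\y$ when $\y$ is a deterministic function of $\x$): both constraints hold by sufficiency, and $I(\v_1;\v_2\mid\y)=0$ is immediate because the two views coincide. More generally, any pair obtained by sampling $\v_1$ and $\v_2$ independently given $\y$ while preserving all label-relevant information will attain equality, so the infimum in Proposition~1 equals $I(\x;\y)$.

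The main technical obstacle is the step $I(\v_1;\y\mid\v_2)=0$: it requires first converting the label constraint $I(\v_2;\y)=I(\x;\y)$ into the conditional-independence statement $\x\perp\y\mid\v_2$, and then transporting this through the Markov structure inherited from $\v_1$ being derived from $\x$. Everything else is bookkeeping with the chain rule and non-negativity, so once this single conditional independence is established the inequality and its tightness both follow immediately.
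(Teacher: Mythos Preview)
Your lower-bound argument is correct and mirrors the paper's proof: both establish $I(\y;\v_i\mid\v_j)=0$ from the constraints and the Markov structure, then use the chain rule on $I(\v_1;\v_2,\y)$ to obtain $I(\v_1;\v_2)=I(\x;\y)+I(\v_1;\v_2\mid\y)\ge I(\x;\y)$, with equality iff $\v_1\perp\v_2\mid\y$. The paper reaches the conditional independence a touch more directly---sandwiching $I(\y;\v_1)\le I(\y;\v_1,\v_2)\le I(\y;\x)$ and using the constraint $I(\y;\v_1)=I(\y;\x)$ to force $I(\y;\v_2\mid\v_1)=0$---whereas you route through a conditional data-processing step, which is also valid once one checks that $(\v_1,\v_2)\perp\y\mid\x$ implies $\v_1\perp\y\mid(\x,\v_2)$.

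There is, however, a genuine error in your achievability paragraph. The sentence ``$I(\v_1;\v_2\mid\y)=0$ is immediate because the two views coincide'' is false: when $\v_1=\v_2=T(\x)$ one has $I(\v_1;\v_2\mid\y)=H(T(\x)\mid\y)$, and this vanishes only if $T(\x)$ is a deterministic function of $\y$. A minimal sufficient statistic need not have that property---any nontrivial noisy-channel model of $\y$ given $\x$ gives $H(T\mid\y)>0$. Your parenthetical special case $T=\y$ (when $\y$ is a deterministic function of $\x$) is correct and already covers the usual classification setting, and your ``more generally'' remark about sampling the two views conditionally independently given $\y$ is the right idea; but the justification via coinciding views should be dropped. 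The paper itself simply asserts attainment of the bound without exhibiting a witness, so on this point you are attempting more than the paper proves---just repair the reasoning.
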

\begin{proof}
Since $I(\v_1; \y)=I(\v_2; \y)=I(\x; \y)$, and $\v_1$, $\v_2$ are functions of $\x$.
\begin{align*}
    I(\y;\x) &= I(\y;\v_1,\v_2) \\
    &=I(\y;\v_1) + I(\y;\v_2|\v_1) \\
    &=I(\y;\x) + I(\y;\v_2|\v_1)
\end{align*}
Therefore $I(\y;\v_2|\v_1)=0$, due to the nonnegativity. Then we have:
\begin{align*}
    I(\v_1;\v_2) &= I(\v_1;\v_2) + I(\y;\v_2|\v_1)\\
    &= I(\v_2; \v_1,\y) \\
    &= I(\v_2; \y) + I(\v_2; \v_1|\y)\\
    &\geq I(\v_2; \y) = I(\x; \y)
\end{align*}
Therefore the optimal views $\v_1^*, \v_2^*$ that minimizes $I(\v_1;\v_2)$ subject to the constraint yields $I(\v_1^*; \v_2^*)=I(\x;\y)$. Also note that optimal views $\v_1^*,\v_2^*$ are conditionally independent given $\y$, as now $I(\v_2^*; \v_1^*|\y)=0$. 
\end{proof}

\begin{prop}
Given optimal views $\v_1^*, \v_2^*$ and minimal sufficient encoders $f_1$, $f_2$, then the learned representations $\z_1$ $($or $\z_2)$ are sufficient statistic of $\v_1$ $($or $\v_2)$ for $\y$, i.e., $I(\z_1;\y)=I(\v_1;\y)$ or $I(\z_2;\y)=I(\v_2;\y)$.
\end{prop}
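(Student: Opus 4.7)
The plan is to establish both inequalities $I(\z_1; \y) \le I(\x; \y)$ and $I(\z_1; \y) \ge I(\x; \y)$, since by Proposition~\ref{prop:1} the optimal views satisfy $I(\v_1^*; \y) = I(\x; \y)$. The upper bound is immediate: $\z_1 = f_1(\v_1^*)$ is a deterministic function of $\v_1^*$, so the data processing inequality yields $I(\z_1; \y) \le I(\v_1^*; \y) = I(\x; \y)$. All the real work is in the matching lower bound, for which I would expand $I(\z_1; \v_2^*, \y)$ using the chain rule in two different orderings:
\begin{align*}
I(\z_1; \v_2^*, \y) &= I(\z_1; \v_2^*) + I(\z_1; \y \mid \v_2^*) \\
&= I(\z_1; \y) + I(\z_1; \v_2^* \mid \y),
\end{align*}
and then identify each piece on the right.

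By sufficiency of $f_1$ (Definition~\ref{def:sufficient}), $I(\z_1; \v_2^*) = I(\v_1^*; \v_2^*)$, and Proposition~\ref{prop:1} pins this to $I(\x; \y)$. The conditional term $I(\z_1; \v_2^* \mid \y)$ vanishes: the proof of Proposition~\ref{prop:1} already gave $I(\v_1^*; \v_2^* \mid \y) = 0$, i.e.\ $\v_1^* \perp \v_2^* \mid \y$, and since $\z_1$ is a function of $\v_1^*$ this conditional independence is inherited as $\z_1 \perp \v_2^* \mid \y$ (a one-line application of data processing conditioned on $\y$). Substituting these facts and using nonnegativity of $I(\z_1; \y \mid \v_2^*)$ forces $I(\z_1; \y) \ge I(\x; \y)$, closing the loop. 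The argument for $\z_2$ is entirely symmetric, swapping the roles of $(f_1, \v_1^*)$ and $(f_2, \v_2^*)$.

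I do not anticipate a serious obstacle: the proof is essentially a short chain-rule manipulation once the two ingredients — sufficiency of $f_1$ and the conditional independence $\v_1^* \perp \v_2^* \mid \y$ harvested from the proof of Proposition~\ref{prop:1} — are in hand. The one subtlety worth stating explicitly is the inheritance of the conditional independence under post-processing by $f_1$; it is routine but easy to gloss over. It is also worth remarking that minimality of the encoder plays no role in this statement — sufficiency alone already guarantees that $\z_1$ retains all the information about $\y$ — so this proposition is really a clean statement about what sufficiency buys you once the views have been chosen optimally.
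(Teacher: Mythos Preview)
Your proof is correct and rests on the same two ingredients as the paper --- encoder sufficiency $I(\z_1;\v_2^*)=I(\v_1^*;\v_2^*)$ and conditional independence $I(\v_1^*;\v_2^*\mid\y)=0$ --- but the route is genuinely different and cleaner. The paper attacks $I(\y;\v_1\mid\z_1)$ directly, expanding it through a chain of multivariate mutual-information identities until it is bounded by $I(\y;\v_1\mid\v_2)+I(\v_1;\v_2\mid\z_1)$, each of which is then shown to vanish. Your decomposition of $I(\z_1;\v_2^*,\y)$ via the chain rule in two orderings arrives at the same two facts in a single step, avoiding the intermediate triple- and quadruple-interaction terms entirely. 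Your observation that minimality is unused here is also accurate: the paper's proof likewise invokes only sufficiency, reserving minimality for the subsequent proposition that $\z_1$ contains \emph{no more} than what $\y$ requires.
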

\begin{proof}
Let's prove for $\z_1$. Since $\z_1$ is a function of $\v_1$, we have:
\begin{align*}
    I(\y;\v_1) &= I(\y;\v_1, \z_1) \\
    &= I(\y;\z_1) + I(\y; \v_1 | \z_1)
\end{align*}
To prove $I(\y;\v_1)=I(\y;\z_1)$, we need to prove $I(\y; \v_1 | \z_1)=0$.
\begin{align*}
    I(\y; \v_1 | \z_1) &= I(\y;\v_1) - I(\y;\v_1;\z_1)\\
    &=I(\y;\v_1;\v_2) + I(\y;\v_1|\v_2) - I(\y;\v_1;\z_1) \\
    &=I(\y;\v_1;\v_2) + I(\y;\v_1|\v_2) - [I(\y;\v_1;\z_1;\v_2) + I(\y;\v_1;\z_1|\v_2)] \\
    & = I(\y;\v_1|\v_2) + [I(\y;\v_1;\v_2) - I(\y;\v_1;\z_1;\v_2)] - I(\y;\v_1;\z_1|\v_2) \\
    & = I(\y;\v_1|\v_2) + I(\y;\v_1;\v_2|\z_1) - I(\y;\v_1;\z_1|\v_2) \\
    & = I(\y;\v_1|\v_2) + I(\y;\v_1;\v_2|\z_1) + \underbrace{I(\y;\z_1|\v_1,\v_2)}_\text{0} - I(\y;\z_1|\v_2)\\
    & \leq I(\y;\v_1|\v_2) + I(\y;\v_1;\v_2|\z_1) \\
    & = I(\y;\v_1|\v_2) + I(\v_1;\v_2|\z_1) - \underbrace{I(\v_1;\v_2|\y,\z_1)}_\text{0} \\
    & = I(\y;\v_1|\v_2) + I(\v_1;\v_2|\z_1)
\end{align*}
In the above derivation $I(\y;\z_1|\v_1,\v_2)=0$ because $\z_1$ is a function of $\v_1$; $I(\v_1;\v_2|\y,\z_1)=0$ because optimal views $\v_1,\v_2$ are conditional independent given $\y$, see Proposition~\ref{prop:1}. Now, we can easily prove $I(\y;\v_1|\v_2)=0$ following a similar procedure in Proposition~\ref{prop:1}. If we can further prove $I(\v_1;\v_2|\z_1)=0$, then we get $I(\y; \v_1 | \z_1)\leq 0$. By nonnegativity, we will have $I(\y; \v_1 | \z_1)= 0$.

To see $I(\v_1;\v_2|\z_1)=0$, recall that our encoders are sufficient. According to Definition 1, we have $I(\v_1;\v_2) = I(\v_2; \z_1)$:
\begin{align*}
    I(\v_1;\v_2|\z_1) &= I(\v_1;\v_2) - I(\v_1;\v_2;\z_1) \\
    &= I(\v_1;\v_2) - I(\v_2; \z_1) + \underbrace{I(\v_2;\z_1|\v_1)}_\text{0} \\
    &= 0 %
\end{align*}
\end{proof}

\begin{prop}
The representations $z_1$ and $z_2$ are also minimal for $y$.
\end{prop}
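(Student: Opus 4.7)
The plan is to show that $z_1 = f_1(\v_1^*)$ is a minimal sufficient statistic of $\v_1^*$ with respect to $\y$, so that together with the $\y$-sufficiency established in the previous proposition, $z_1$ is an optimal representation in the sense of Def~\ref{def:optimal_rep_for_task}; $z_2$ is handled symmetrically. Concretely I need to check that $I(z_1; \v_1^*) \leq I(g(\v_1^*); \v_1^*)$ for every function $g$ satisfying $I(g(\v_1^*); \y) = I(\v_1^*; \y)$. The strategy is to reduce $\y$-sufficiency to contrastive-sufficiency (Definition 1), at which point the minimality of $f_1$ as a contrastive-sufficient encoder (Definition 2) finishes the job in one line.

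The key lemma is that the optimal views admit two conditional-independence relations: $I(\v_1^*; \v_2^* \mid \y) = 0$, which is already recorded at the end of Proposition~\ref{prop:1}, together with $I(\v_1^*; \y \mid \v_2^*) = 0$. The second one I would derive by chain-ruling $I(\v_1^*; \v_2^*, \y)$ in two different orders,
\[
I(\v_1^*; \v_2^*) + I(\v_1^*; \y \mid \v_2^*) = I(\v_1^*; \y) + I(\v_1^*; \v_2^* \mid \y),
\]
and then using $I(\v_1^*; \v_2^*) = I(\v_1^*; \y)$ together with the first identity.

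Next I would argue that any $\y$-sufficient $g$ is automatically contrastive-sufficient. Expanding $I(g(\v_1^*); \v_2^*, \y)$ in two orders,
\[
I(g(\v_1^*); \y) + I(g(\v_1^*); \v_2^* \mid \y) = I(g(\v_1^*); \v_2^*) + I(g(\v_1^*); \y \mid \v_2^*),
\]
and then applying the conditional data-processing inequality to the deterministic map $g$, I get $I(g(\v_1^*); \v_2^* \mid \y) \leq I(\v_1^*; \v_2^* \mid \y) = 0$ and $I(g(\v_1^*); \y \mid \v_2^*) \leq I(\v_1^*; \y \mid \v_2^*) = 0$. Both conditional terms then vanish, so $I(g(\v_1^*); \v_2^*) = I(g(\v_1^*); \y) = I(\v_1^*; \v_2^*)$, which is exactly the contrastive-sufficiency identity for $g$.

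Invoking Definition 2 applied to $f_1$ against every such $g$ then yields $I(z_1; \v_1^*) \leq I(g(\v_1^*); \v_1^*)$, which is the minimality we want; the symmetric argument handles $z_2$. The main delicate step is the conditional DPI, which I would justify in one line via the chain rule: since $g(\v_1^*)$ is determined by $\v_1^*$, for any $W, Z$ one has $I(\v_1^*; W \mid Z) = I(g(\v_1^*), \v_1^*; W \mid Z) = I(g(\v_1^*); W \mid Z) + I(\v_1^*; W \mid g(\v_1^*), Z) \geq I(g(\v_1^*); W \mid Z)$. Everything else is the same flavour of mutual-information bookkeeping used in the earlier propositions.
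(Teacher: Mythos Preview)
Your argument is correct and takes a genuinely different route from the paper's. The paper proceeds by decomposing $I(\z_1;\v_1) = I(\z_1;\v_1\mid\y) + I(\v_1;\y)$ for any contrastive-sufficient encoder (using $I(\v_1;\y\mid\z_1)=0$ from the previous proposition), obtaining the lower bound $I(\z_1;\v_1)\geq I(\v_1;\y)$; it then simply asserts that this bound is ``achievable'', so that the minimal contrastive-sufficient encoder attains $I(\z_1;\v_1\mid\y)=0$ and is therefore minimal for $\y$. You avoid that achievability claim entirely: by proving that every $\y$-sufficient $g$ is automatically contrastive-sufficient on the optimal views (via the two conditional-independence identities and conditional DPI), Definition~\ref{def:min_sufficient} applied to $f_1$ immediately yields $I(\z_1;\v_1^*)\leq I(g(\v_1^*);\v_1^*)$ for every $\y$-sufficient $g$. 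Combined with the previous proposition (contrastive-sufficient $\Rightarrow$ $\y$-sufficient), this shows that on $(\v_1^*,\v_2^*)$ the two sufficiency notions coincide, which is a cleaner conceptual statement and closes the gap the paper leaves open. The paper's route, if the achievability step were supplied, would in exchange give the slightly stronger absolute conclusion $I(\z_1;\v_1\mid\y)=0$, whereas yours gives only the relative minimality that the proposition actually asks for.
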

\begin{proof}
For all sufficient encoders, we have proved $\z_1$ are sufficient statistic of $\v_1$ for predicting $\y$. Namely $I(\v_1;\y|\z_1)=0$. Now:
\begin{align*}
    I(\z_1;\v_1) &= I(\z_1;\v_1|\y) +  I(\z_1;\v_1;\y) \\
    &= I(\z_1;\v_1|\y) + I(\v_1;\y) - \underbrace{I(\v_1;\y|\z_1)}_\text{0} \\
    &= I(\z_1;\v_1|\y) + I(\v_1;\y) \\
    &\geq I(\v_1;\y)
\end{align*}
The minimal sufficient encoder will minimize $I(\z_1;\v_1)$ to $I(\v_1;\y)$. This is achievable and leads to $I(\z_1;\v_1|\y)=0$. Therefore, $z_1$ is a minimal sufficient statistic for predicting $y$, thus optimal. Similarly, $z_2$ is also optimal. %
\end{proof}

\section{Implementation Details}

\subsection{Spatial Patches with Different Distance}
\textbf{Why using DIV2K~\cite{agustsson2017ntire}?} Recall that we randomly sample patches with a distance of \emph{d}. During such sampling process, there is a possible bias that with an image of relatively small size (e.g., 512x512),
a large \emph{d} (e.g., 384) will always push these two patches around the boundary. To minimize this bias, we choose to use high resolution images (e.g. 2k) from DIV2K dataset.

\textbf{Setup and Training.} We use the training framework of CMC~\cite{tian2019contrastive}. The backbone network is a tiny AlexNet, following~\cite{hjelm2018learning,tian2019contrastive}. We train for $3000$ epochs, with the learning rate initialized as $0.03$ and decayed with cosine annealing.

\noindent \textbf{Evaluation.} We evaluate the learned representation on both STL-10 and CIFAR-10 datasets. For CIFAR-10, we resize the image to 64$\times$64 to extract features. The linear classifier is trained for 100 epochs.

\subsection{Channel Splitting with Various Color Spaces}\label{sec:color}
\textbf{Setup and Training.} The backbone network is also a tiny AlexNet, with the modification of adapting the first layer to input of $1$ or $2$ channels. We follow the training recipe in~\cite{tian2019contrastive}.

\noindent \textbf{Evaluation.} For the evaluation on STL-10 dataset, we train a linear classifier for 100 epochs and report the single-crop classification accuracy. For NYU-Depth-v2 segmentation task, we freeze the backbone network and train a 4-layer decoder on top of the learned representations. We report the mean IoU for labeled classes.

\subsection{Reducing $I(\mathbf{v_1}; \mathbf{v_2})$ with Frequency Separation}
\begin{figure}[ht]
\centering
\includegraphics[width=0.9\linewidth]{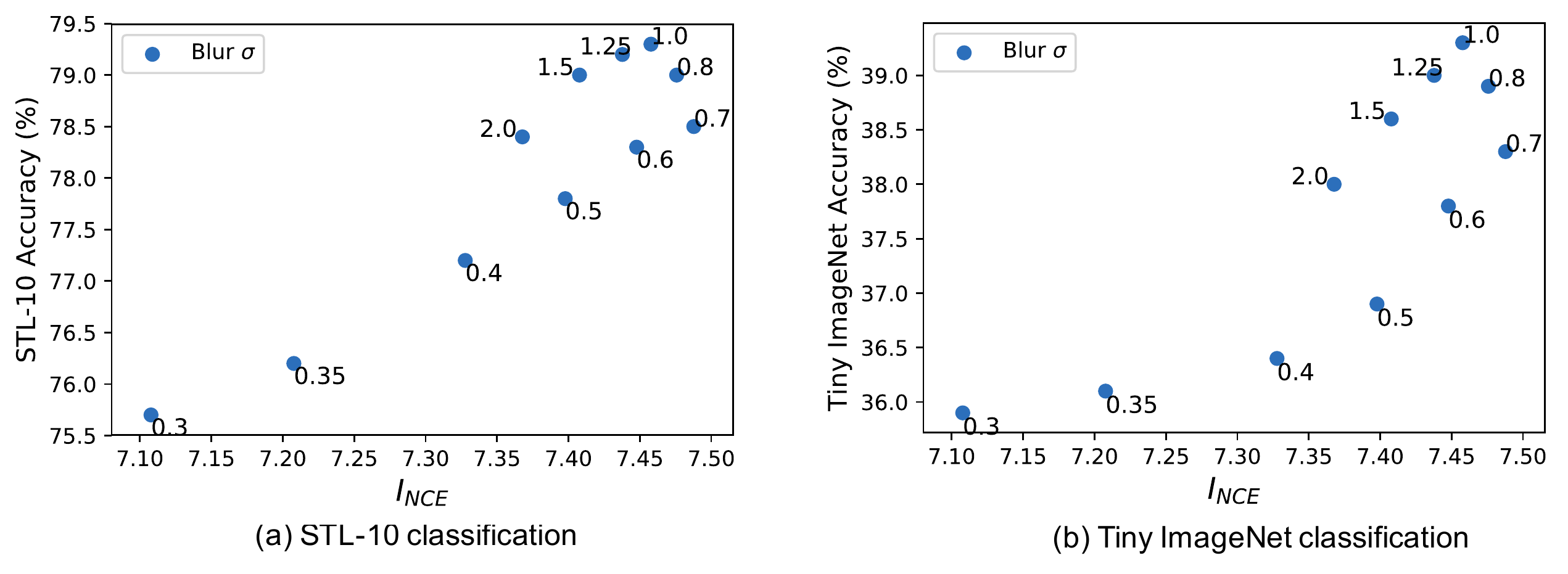}
\vspace{-5pt}
\caption{\small We create views by splitting images into low- and high-frequency pairs with a blur function parameterized by $\sigma$. $I_{NCE}$ is maximized at $\sigma=0.7$. Starting from this point, either increasing or decreasing $\sigma$ will reduce $I_{NCE}$ but interestingly they form two different trajectories. When increasing $\sigma$ from 0.7, the accuracy firstly improves and then drops, forming a reverse-U shape corresponding to (a) in Figure 2 of the main paper. While decreasing $\sigma$ from 0.7, the accuracy keeps diminishing, corresponding to (b) in Figure 2 of the main paper.}
\label{fig:blur}
\end{figure}

Another example we consider is to separate images into low- and high-frequency images. To simplify, we extract $\mathbf{v_1}$ and $\mathbf{v_2}$ by Gaussian blur, i.e., 
\begin{align*}
    \mathbf{v_1}&= \texttt{Blur}(\mathbf{x}, \sigma) \\ \mathbf{v_2}&= \mathbf{x}-\mathbf{v_1}
\end{align*}
where \texttt{Blur} is the Gaussian blur function and $\sigma$ is the parameter controlling the kernel. Extremely small or large $\sigma$ can make the high- or low-frequency image contain little information. In theory, the maximal $I(\mathbf{v_1}; \mathbf{v_2})$ is obtained with some intermediate $\sigma$. As shown in Figure~\ref{fig:blur}, we found $\sigma=0.7$ leads to the maximal $I_{NCE}$ on the STL-10 dataset. Either blurring more or less will reduce $I_{NCE}$, but interestingly blurring more leads to different trajectory in the plot than blurring less. When increasing $\sigma$ from 0.7, the accuracy firstly improves and then drops, forming a reverse-U shape with a sweet spot at $\sigma=1.0$. This situation corresponds to (a) in Figure 2 of the main paper. While decreasing $\sigma$ from 0.7, the accuracy keeps diminishing, corresponding to (b) in Figure 2 of the main paper. This reminds us of the two aspects in Proposition 3.1: mutual information is not the whole story; \emph{what} information is shared between the two views also matters.

\textbf{Setup and Training.} The setup is almost the same as that in color channel splitting experiments, except that each view consists of three input channels. We follow the training recipe in~\cite{tian2019contrastive}.

\noindent\textbf{Evaluation.} We train a linear classifier for 100 epochs on STL-10 dataset and 40 epochs on TinyImageNet dataset.

\subsection{Colorful Moving MNIST}
\textbf{Dataset.} Following the original Moving MNIST dataset~\cite{srivastava2015unsupervised}, we use a canvas of size 64$\times$64, which contains a digit of size 28$\times$28. The back ground image is a random crop from original STL-10 images (96$\times$96). The starting position of the digit is uniformly sampled inside the canvas. The direction of the moving velocity is uniformly sampled in $[0, 2\pi]$, while the magnitude is kept as $0.1$ of the canvas size. When the digit touches the boundary, the velocity is reflected.

\noindent\textbf{Setup.} We use the first 10 frames as $\mathbf{v_1}$ (namely $k=10$), and we construct $\mathbf{v_2}$ by referring to the 20-th frame (namely $t=20$). During the contrastive learning phase, we employ a 4-layer ConvNet to encode images and use a single layer LSTM~\cite{hochreiter1997long} on top of the ConvNet to aggregate features of continuous frames. The CNN backbone consists of 4 layers with $8,16,32,64$ filters from low to high. Average pooling is applied after the last convolutional layer, resulting in a 64 dimensional representation. The dimensions of the hidden layer and output in LSTM are both 64.

\noindent\textbf{Examples}. The examples of $\mathbf{v_1}$ and $\mathbf{v_2}$ are shown in Figure~\ref{fig:mnist}, where the three rows on the RHS shows cases that only a single factor (digit, position, or background) is shared.

\noindent\textbf{Training.} We perform intra-batch contrast. Namely, inside each batch of size 128, we contrast each sample with the other 127 samples. We train for 200 epochs, with the learning rate initialized as $0.03$ and decayed with cosine annealing.

\subsection{Un-/Semi-supervised View Learning}
\begin{figure}[h]
\centering
\vspace{-10pt}
\includegraphics[width=0.85\linewidth]{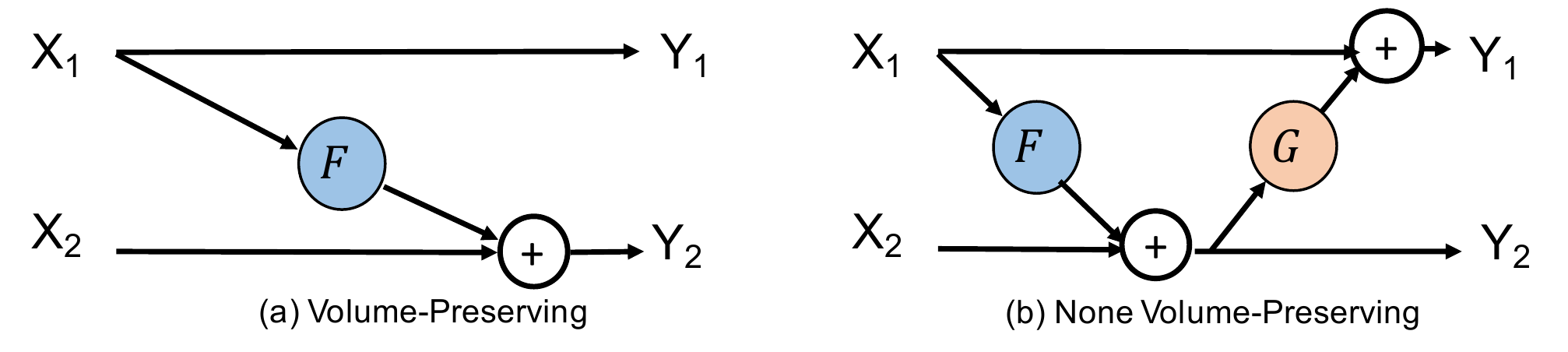}
\caption{Volume-preserving (a), and none volume-preserving (b) invertible model.}
\vspace{-5pt}
\label{fig:flow}
\end{figure}

\noindent\textbf{Invertible Generator.} Figure~\ref{fig:flow} shows the basic building block for the Volume-Preserving (VP) and None-Volume-Preserving (NVP) invertible view generator. The $F$ and $G$ are pixel-wise convolutional function, \emph{i.e.}, convolutional layers with 1$\times$1 kernel. $\mathbf{X}_1$ and $\mathbf{Y}_1$ represent a single channel of the input and output respectively, while $\mathbf{X}_2$ and $\mathbf{Y}_2$ represent the other two channels. While stacking basic building blocks, we alternatively select the first, second, and the third channel as $\mathbf{X}_1$, to enhance the expressivity of view generator.

\noindent\textbf{Setup and Training.} For unsupervised view learning that only uses the adversarial $I_{NCE}$ loss, we found the training is relatively unstable, as also observed in GAN~\cite{goodfellow2014generative}. We found the learning rate of view generator should be larger than that of $I_{NCE}$ approximator. Concretely, we use Adam optimizer~\cite{kingma2014adam}, and we set the learning rates of view generator and $I_{NCE}$ approximator as $2e$-$4$ and $6e$-$4$, respectively. For the semi-supervised view learning, we found the training is stable across different learning rate combinations, which we considered as an advantage. To be fair, we still use the same learning rates for both view generator and $I_{NCE}$ approximator. 

\noindent\textbf{Contrastive Learning and Evaluation.} After the view learning stage, we perform contrastive learning and evaluation by following the recipe in Section~\ref{sec:color}.
\section{Data Augmentation as InfoMin}

\subsection{InfoMin Augmentation}

\begin{figure}[h]
\centering
\includegraphics[width=0.95\linewidth]{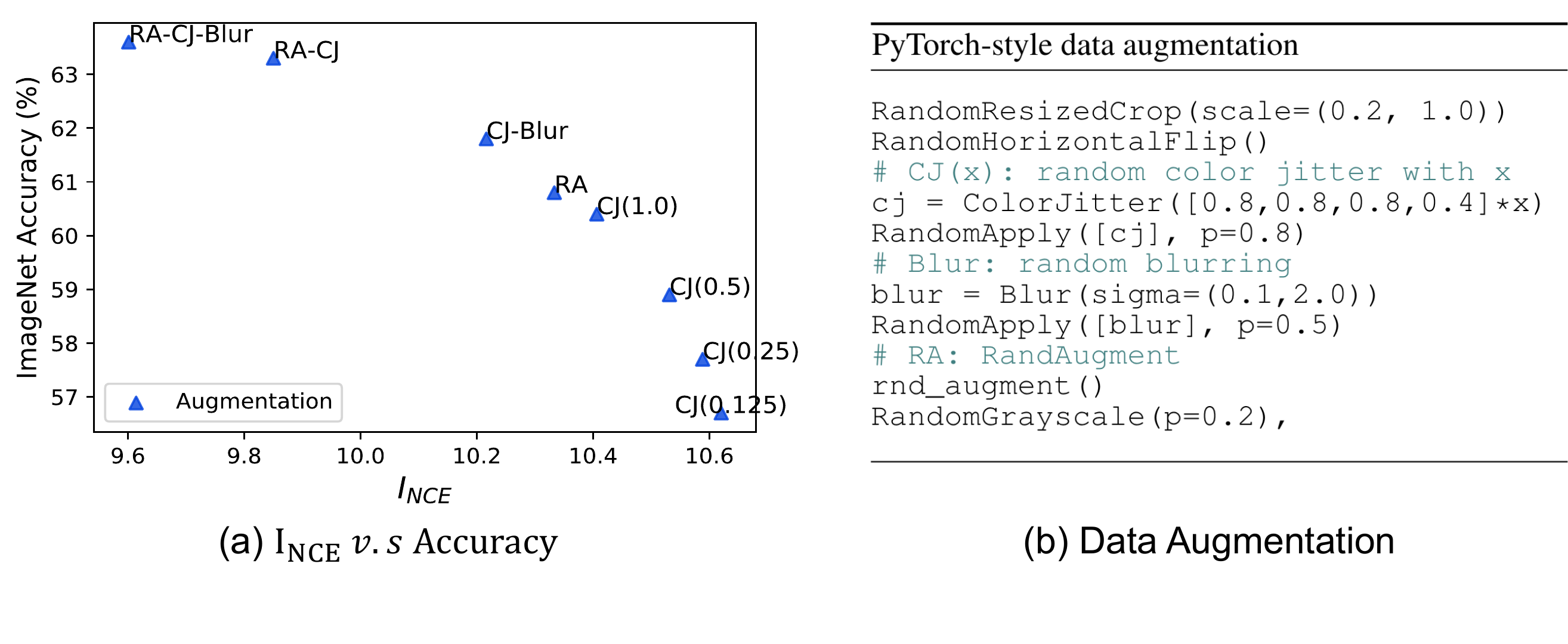}
\vspace{-10pt}
\caption{(a) data augmentation as InfoMin on ImageNet with linear projection head; (b) illustration of step-by-step data augmentation used in InfoMin.}
\label{fig:imagenet}
\end{figure}

\textbf{InfoMin Aug.} We gradually strengthen the family of data augmentation functions $\mathbb{T}$, and plot the trend between accuracy in downstream linear evaluation benchmarks and $I_{NCE}$. The overall results are shown in Figure~\ref{fig:imagenet}(a), where the plot is generated by only varying data augmentation while keeping all other settings fixed. We consider \emph{Color Jittering} with various strengths, \emph{Gaussian Blur}, \emph{RandAugment}~\cite{cubuk2019randaugment}, and their combinations, as illustrated in Figure~\ref{fig:imagenet}(b). The results suggest that as we reduce $I_{NCE}(\mathbf{v_1};\mathbf{v_1})$, via stronger $\mathbb{T}$ (in theory, $I(\mathbf{v_1};\mathbf{v_1})$ also decreases), the downstream accuracy keeps improving.

\subsection{Analysis of Data Augmentation as it relates to MI and Transfer Performance}

We also investigate how sliding the strength parameter of individual augmentation functions leads to a practical reverse-U curves, as shown in Figures \ref{fig:imagenet_crop} and \ref{fig:imagenet_color}.

\begin{figure}[ht]
\subfloat[Linear projection head]{
\includegraphics[width=0.45\linewidth]{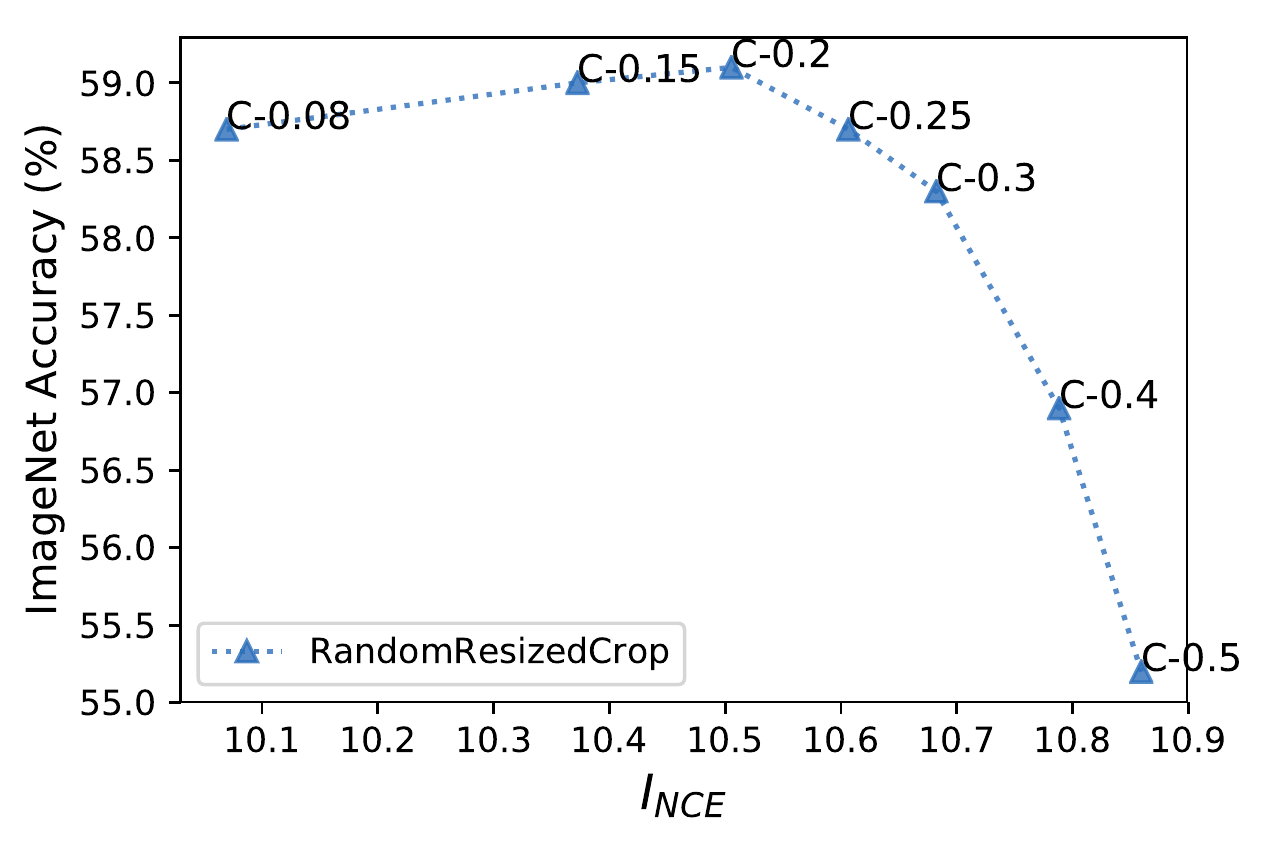}
}
\subfloat[MLP projection head]{
\includegraphics[width=0.45\linewidth]{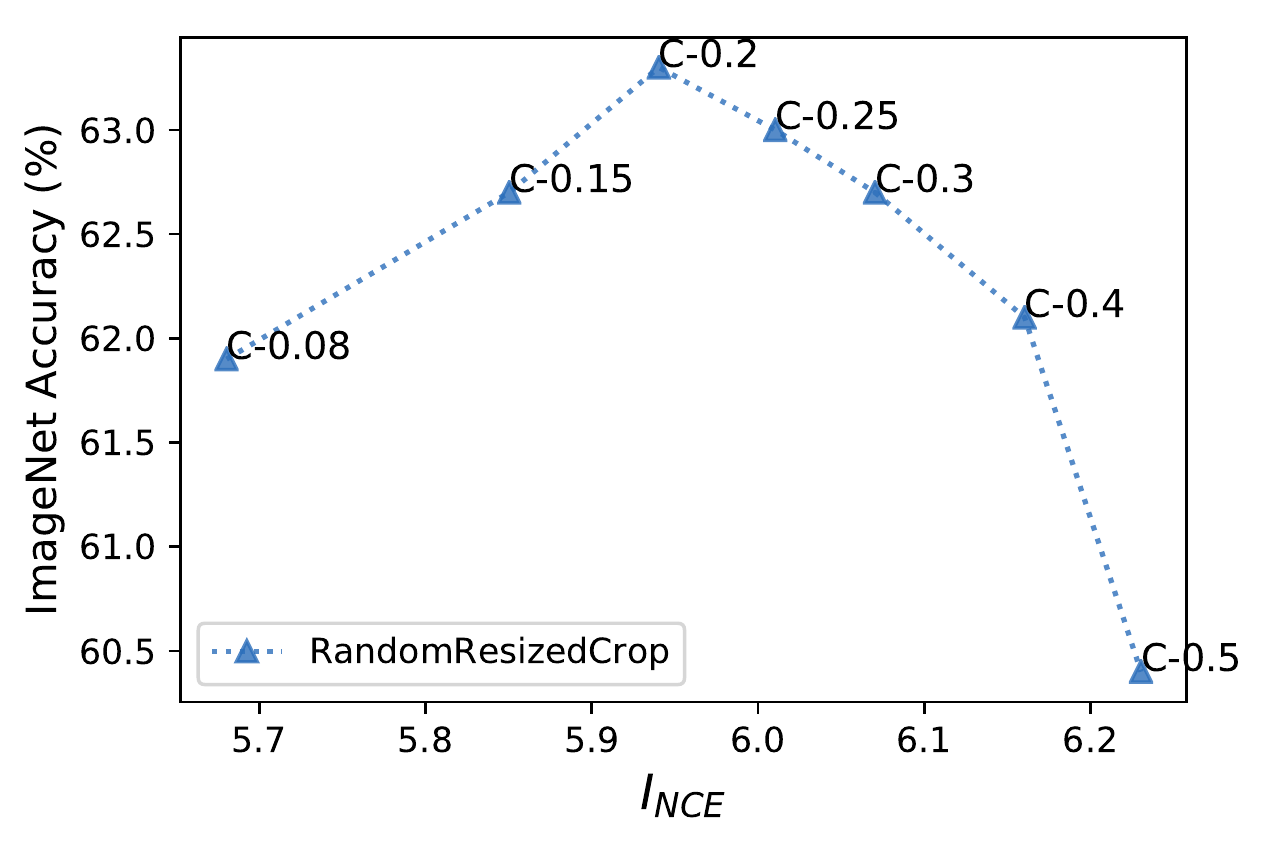}
}
\caption{Different low-area cropping bounds in RandomResizedCrop.}
\label{fig:imagenet_crop}
\end{figure}
\textbf{Cropping.} In PyTorch, the \texttt{RandomResizedCrop(scale=(c, 1.0))} data augmentation function sets a low-area cropping bound \texttt{c}. Smaller \texttt{c} means more aggressive data augmentation. We vary \texttt{c} for both a linear critic head~\cite{wu2018unsupervised} (with temperature 0.07) and nonlinear critic head~\cite{chen2020simple} (with temperature 0.15), as shown in Figure~\ref{fig:imagenet_crop}. In both cases, decreasing \texttt{c} forms a reverse-U shape between $I_{NCE}$ and linear classification accuracy, with a sweet spot at $c=0.2$. This is different from the widely used $0.08$ in the supervised learning setting. Using $0.08$ can lead to more than $1\%$ drop in accuracy compared to the optimal $0.2$ when a nonlinear projection head is applied.

\begin{figure}[ht]
\subfloat[Linear projection head]{
\includegraphics[width=0.45\linewidth]{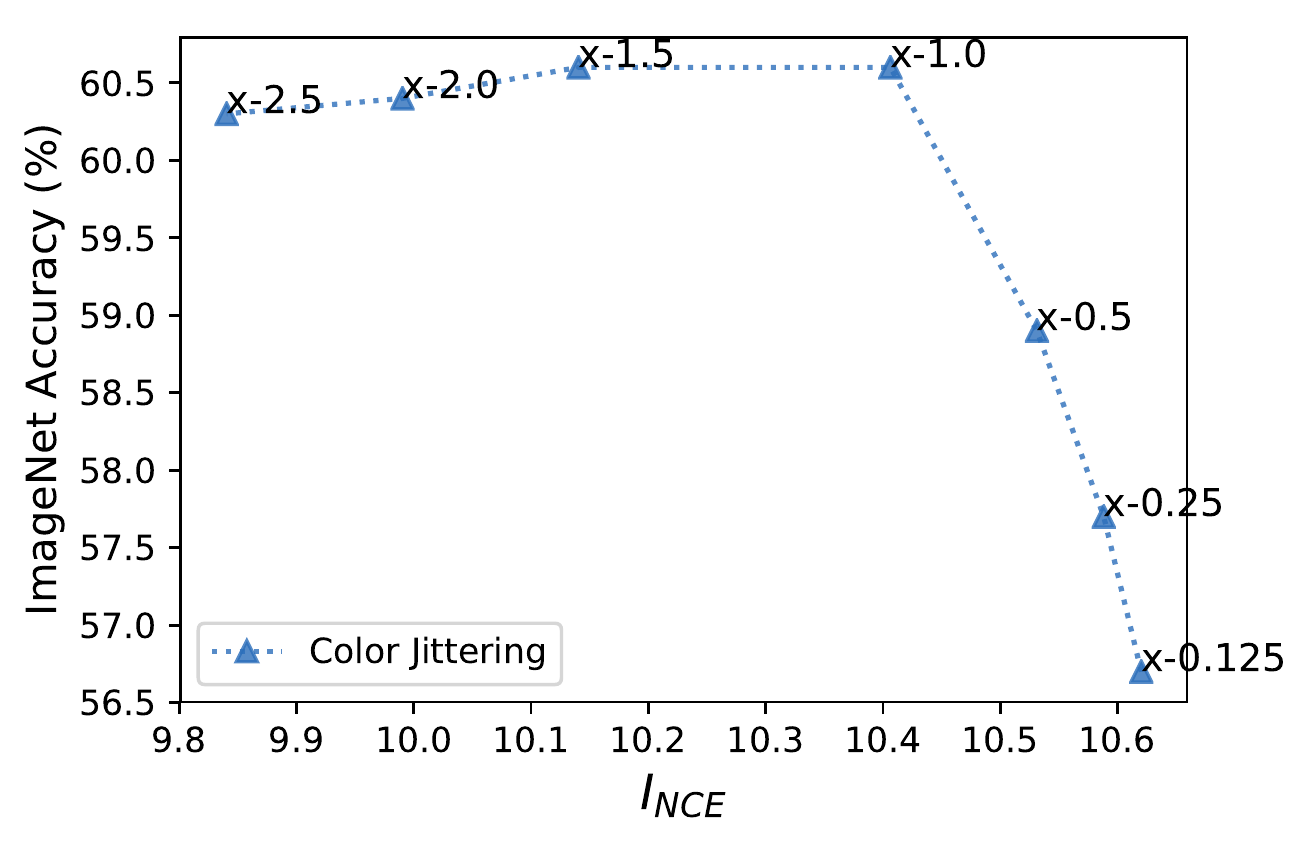}
}
\subfloat[MLP projection head]{
\includegraphics[width=0.45\linewidth]{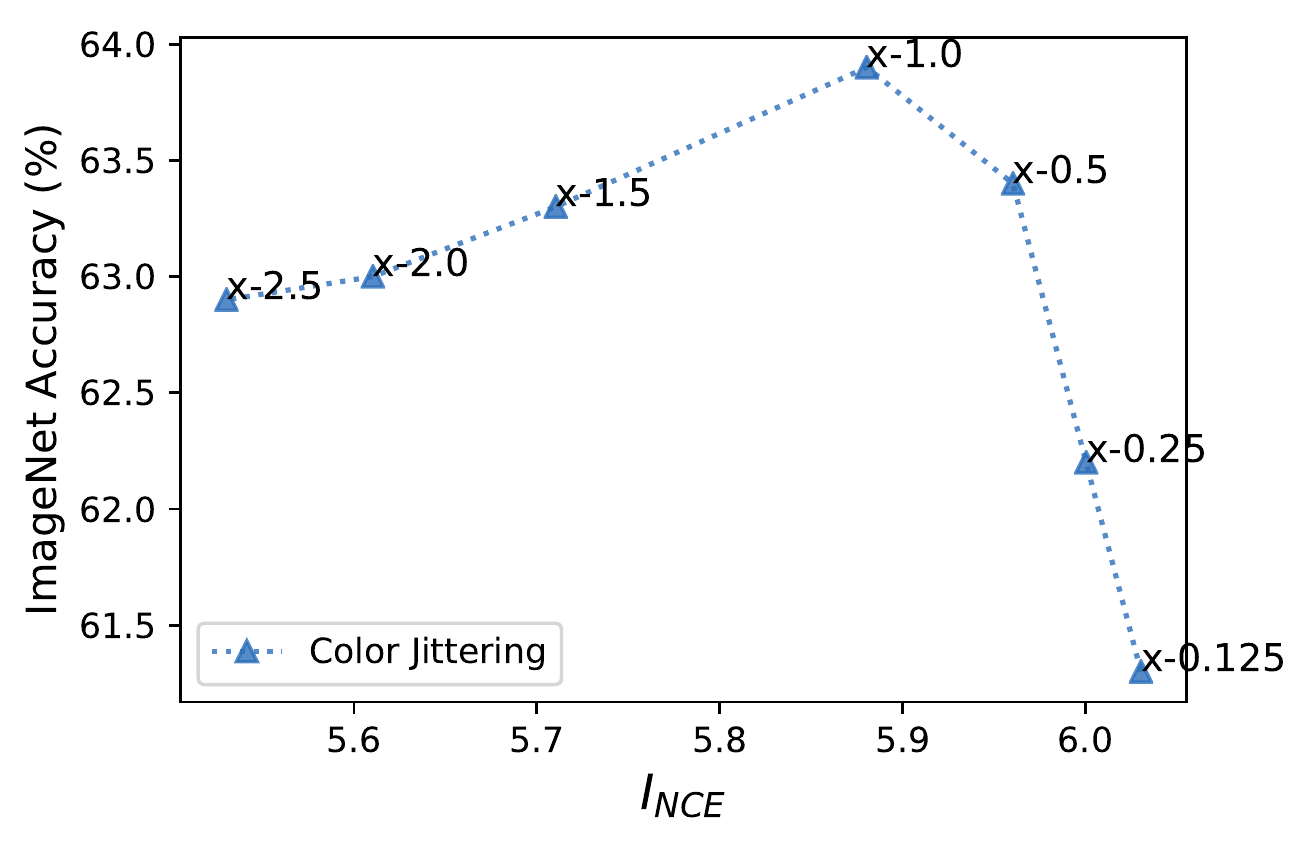}
}
\caption{Different magnitudes of Color Jittering.}
\label{fig:imagenet_color}
\end{figure}
\textbf{Color Jittering.} As shown in Figure~\ref{fig:imagenet}(b), we adopt a parameter $x$ to control the strengths of color jittering function. As shown in Figure~\ref{fig:imagenet_color}, increasing $x$ from $0.125$ to $2.5$ also traces a reverse-U shape, no matter whether a linear or nonlinear projection head is used. The sweet spot lies around $x=1.0$, which is the same value as used in SimCLR~\cite{chen2020simple}. Practically, we see the accuracy is more sensitive around the sweet spot for the nonlinear projection head, which also happens for cropping. This implies that it is important to find the sweet spot for future design of augmentation functions.

\header{Details.} These plots are based on the MoCo~\cite{he2019momentum} framework. We use $65536$ negatives and pre-train for 100 epochs on 8 GPUs with a batch size of 256. The learning rate starts as $0.03$ and decays following a cosine annealing schedule. For the downstream task of linear evaluation, we train the linear classifier for 60 epochs with an initial learning rate of 30, following~\cite{tian2019contrastive}. 

\subsection{Results on ImageNet Benchmark}

\begin{table}[ht]
\vspace{-5pt}
\centering
\setlength{\tabcolsep}{7pt}
\caption{\label{tab:imagenet_archs} \small Single-crop ImageNet accuracies (\%) of linear classifiers~\protect\cite{zhang2016colorful} trained on representations learned with different methods using various architectures. }
\begin{tabular}{llccccc}
\toprule
Method & Architecture        & Param. & Head & Epochs & Top-1 & Top-5 \\ 
\midrule
\multicolumn{7}{l}{\textit{Methods using contrastive learning:}}\\
InstDis~\cite{wu2018unsupervised}   & ResNet-50    & 24  & Linear & 200  & 56.5  & -    \\
Local Agg.~\cite{zhuang2019local}   & ResNet-50    & 24  & Linear & 200  & 58.8  & -     \\
CPC v2~\cite{henaff2019data}        & ResNet-50    & 24  & - &  -   & 63.8  & 85.3  \\
CMC~\cite{tian2019contrastive} & 2x ResNet-50(0.5x)& 12  & Linear  & 240 & 60.0 & 82.3 \\
CMC~\cite{tian2019contrastive} & 2x ResNet-50(1x) & 47  & Linear   & 240 & 66.2 & 87.0 \\
CMC~\cite{tian2019contrastive} & 2x ResNet-50(2x)& 188 & Linear    & 240 & 70.6 & 89.7 \\
MoCo~\cite{he2019momentum} & ResNet-50      & 24  & Linear    & 200 & 60.6 & - \\
MoCo~\cite{he2019momentum} & ResNet-50 (2x) & 94  & Linear    & 200 & 65.4 & - \\
MoCo~\cite{he2019momentum} & ResNet-50 (4x) & 375 & Linear    & 200 & 68.6 & - \\
PIRL~\cite{misra2019self} & ResNet-50      & 24  & Linear    & 800 & 63.6 & - \\
PIRL~\cite{misra2019self} & ResNet-50 (2x) & 94  & Linear    & 800 & 67.4 & - \\
SimCLR~\cite{chen2020simple} & ResNet-50      & 24  & MLP    & 1000 & 69.3 & - \\
SimCLR~\cite{chen2020simple} & ResNet-50 (2x) & 94  & MLP    & 1000 & 74.2 & - \\
SimCLR~\cite{chen2020simple} & ResNet-50 (4x) & 375 & MLP    & 1000 & 76.5 & - \\
MoCo V2~\cite{chen2020improved} & ResNet-50   & 24  & MLP    & 800 & 71.1 & - \\
InfoMin Aug. & ResNet-50   & 24  & MLP    & 100 & 67.4 & 87.9 \\
InfoMin Aug. & ResNet-50   & 24  & MLP    & 200 & 70.1 & 89.4 \\
InfoMin Aug. & ResNet-50   & 24  & MLP    & 800 & 73.0 & 91.1 \\
InfoMin Aug. & ResNet-101   & 43  & MLP    & 300 & 73.4 & - \\
InfoMin Aug. & ResNet-152   & 58  & MLP    & 200 & 73.4 & - \\
InfoMin Aug. & ResNeXt-101  & 87  & MLP    & 200 & 74.5 & - \\
InfoMin Aug. & ResNeXt-152  & 120 & MLP    & 200 & 75.2 & - \\
\midrule
\multicolumn{7}{l}{\textit{Methods NOT using contrastive
learning:}}\\
Exemplar~\cite{dosovitskiy2014discriminative,kolesnikov2019revisiting} & ResNet-50 (3x) & 211 & - & 35 & 46.0 & -\\
JigSaw~\cite{noroozi2016unsupervised,kolesnikov2019revisiting}   & ResNet-50 (2x) & 94 & - & 35 & 44.6 & -\\
Relative Position~\cite{doersch2015unsupervised,kolesnikov2019revisiting}  & ResNet-50 (2x) & 94 & - & 35 & 51.4 & -\\
Rotation~\cite{gidaris2018unsupervised,kolesnikov2019revisiting} & RevNet-50 (4x) & 86 & - & 35 & 55.4 & -\\
BigBiGAN~\cite{donahue2019large} & RevNet-50 (4x) & 86 & - & -  & 61.3 & 81.9\\
SeLa~\cite{Asano2020} & ResNet-50 & 24 & - & 400  & 61.5 & 84.0\\
\bottomrule
\end{tabular}
\end{table}

On top of the ``RA-CJ-Blur'' augmentations shown in Figure~\ref{fig:imagenet}, we further reduce the mutual information (or enhance the invariance) of views by using PIRL~\cite{misra2019self}, i.e., adding JigSaw~\cite{noroozi2016unsupervised}. This improves the accuracy of the linear classifier from $63.6\%$ to $65.9\%$. Replacing the widely-used linear projection head~\cite{wu2018unsupervised,tian2019contrastive,he2019momentum} with a 2-layer MLP~\cite{chen2020simple} increases the accuracy to $67.3\%$. When using this nonlinear projection head, we found a larger temperature is beneficial for downstream linear readout (as also reported in~\cite{chen2020improved}). All these numbers are obtained with 100 epochs of pre-training. For simplicity, we call such unsupervised pre-training as InfoMin pre-training (i.e., pre-training with our InfoMin inspired augmentation). As shown in %
Table~\ref{tab:imagenet_archs},
our InfoMin model trained with 200 epochs achieves $70.1\%$, outperforming SimCLR with 1000 epochs. Finally, a new state-of-the-art, $73.0\%$ is obtained by training for 800 epochs. Compared to SimCLR requiring 128 TPUs for large batch training, our model can be trained with as less as 4 GPUs on a single machine.%

For future improvement, there is still room for manually designing better data augmentation. As shown in Figure~\ref{fig:imagenet}(a), using ``RA-CJ-Blur'' has not touched the sweet spot yet. Another way to is to learn to synthesize better views (augmentations) by following (and expanding) the idea of semi-supervised view learning method presented in Section 4.2.2 of the main paper.

\noindent \textbf{Different Architectures.} We further include the performance of InfoMin as well as other SoTA methods with different architectures in Table~\ref{tab:imagenet_archs}. Increasing the network capacity leads to significant improvement of linear readout performance on ImageNet for InfoMin, which is consistent with previous literature~\cite{tian2019contrastive,he2019momentum,chen2020simple,misra2019self}.

\subsection{Comparing with SoTA in Transfer Learning}

\section{Transfer Learning with Various Backbones and Detectors on COCO}\label{sec:coco}
We evaluated the transferability of various models pre-trained with InfoMin, under different detection frameworks and fine-tuning schedules. In \textbf{all} cases we tested, models pre-trained with InfoMin outperform those pre-trained with supervised cross-entropy loss. Interestingly, ResNeXt-152 trained with InfoMin on \textbf{ImageNet-1K} beats its supervised counterpart trained on \textbf{ImageNet 5K}, which is \textbf{6$\x$} times larger. Bounding box AP and mask Ap are reported on \texttt{val2017}

\subsection{ResNet-50 with Mask R-CNN, C4 architecture}
The results of Mask R-CNN with R-50 C4 backbone are shown in Table~\ref{tab:coco_r50_c4}. We experimented with \textbf{1$\x$} and \textbf{2$\x$} schedule.
\begin{table}[h]
\centering
\caption{\label{tab:coco_r50_c4} \small COCO object detection and instance segmentation. \textbf{R50-C4}. In the brackets are the gaps to the ImageNet supervised pre-training counterpart. In green are gaps of $\geq$ 0.5 point. * numbers are from~\protect\cite{he2019momentum} since we use exactly the same fine-tuning setting.}
\subfloat[Mask R-CNN, R50-\textbf{C4}, \textbf{1$\x$} schedule]{
\begin{tabular}{c|ccc|ccc}
\toprule
pre-train &
\apbbox{~} &
\apbbox{50} &
\apbbox{75} &
\apmask{~} &
\apmask{50} &
\apmask{75} \\
\hline
random init*       & 26.4 & 44.0 & 27.8 & 29.3 & 46.9 & 30.8 \\
supervised*        & 38.2 & 58.2 & 41.2 & 33.3 & 54.7 & 35.2 \\
\hline
MoCo*              & 38.5($\mathcolor{black}{\uparrow}$0.3) & 58.3($\mathcolor{black}{\uparrow}$0.1) & 
41.6($\mathcolor{black}{\uparrow}$0.4) & 
33.6($\mathcolor{black}{\uparrow}$0.1) & 
54.8($\mathcolor{black}{\uparrow}$0.1) & 
35.6($\mathcolor{black}{\uparrow}$0.1)\\

InfoMin Aug.           & 39.0($\mathcolor{green}{\uparrow}$0.8) & 58.5($\mathcolor{black}{\uparrow}$0.3) & 
42.0($\mathcolor{green}{\uparrow}$0.8) & 
34.1($\mathcolor{green}{\uparrow}$0.8) & 
55.2($\mathcolor{green}{\uparrow}$0.5) & 
36.3($\mathcolor{green}{\uparrow}$1.1) \\
\bottomrule
\end{tabular}
}

\subfloat[Mask R-CNN, R50-\textbf{C4}, \textbf{2$\x$} schedule]{
\begin{tabular}{c|ccc|ccc}
\toprule
pre-train &
\apbbox{~} &
\apbbox{50} &
\apbbox{75} &
\apmask{~} &
\apmask{50} &
\apmask{75} \\
\hline
random init*       & 35.6 & 54.6 & 38.2 & 31.4 & 51.5 & 33.5 \\
supervised*        & 40.0 & 59.9 & 43.1 & 34.7 & 56.5 & 36.9 \\
\hline
MoCo*              & 40.7($\mathcolor{green}{\uparrow}$0.7) & 60.5($\mathcolor{green}{\downarrow}$0.6) & 
44.1($\mathcolor{green}{\uparrow}$1.0) & 
35.6($\mathcolor{green}{\downarrow}$0.7) & 
57.4($\mathcolor{green}{\downarrow}$0.8) & 
38.1($\mathcolor{green}{\uparrow}$0.7)\\

InfoMin Aug.           & 41.3($\mathcolor{green}{\uparrow}$1.3) & 61.2($\mathcolor{green}{\uparrow}$1.3) & 
45.0($\mathcolor{green}{\uparrow}$1.9) & 
36.0($\mathcolor{green}{\uparrow}$1.3) & 
57.9($\mathcolor{green}{\uparrow}$1.4) & 
38.3($\mathcolor{green}{\uparrow}$1.4) \\
\bottomrule
\end{tabular}
}
\vspace{-5pt}
\end{table}

\subsection{ResNet-50 with Mask R-CNN, FPN architecture}
The results of Mask R-CNN with R-50 FPN backbone are shown in Table~\ref{tab:supp_coco_r50_fpn}. We compared with MoCo~\cite{he2019momentum} and MoCo v2~\cite{chen2020improved} under \textbf{2$\x$} schedule, and also experimented with \textbf{6$\x$} schedule. 
\begin{table}[h]
\centering
\caption{\label{tab:supp_coco_r50_fpn} \small COCO object detection and instance segmentation. \textbf{R50-FPN}. In the brackets are the gaps to the ImageNet supervised pre-training counterpart. In green are gaps of $\geq$ 0.5 point.}

\subfloat[Mask R-CNN, R50-\textbf{FPN}, \textbf{2$\x$} schedule]{
\begin{tabular}{c|ccc|ccc}
\toprule
pre-train &
\apbbox{~} &
\apbbox{50} &
\apbbox{75} &
\apmask{~} &
\apmask{50} &
\apmask{75} \\
\hline
random init       & 38.4 & 57.5 & 42.0 & 34.7 & 54.8 & 37.2 \\
supervised        & 41.6 & 61.7 & 45.3 & 37.6 & 58.7 & 40.4 \\
\hline
MoCo~\cite{he2019momentum}               & 41.7($\mathcolor{black}{\uparrow}$0.1) & 61.4($\mathcolor{black}{\downarrow}$0.3) & 45.7($\mathcolor{black}{\uparrow}$0.4) & 37.5($\mathcolor{black}{\downarrow}$0.1) & 58.6($\mathcolor{black}{\downarrow}$0.1) & 40.5($\mathcolor{black}{\uparrow}$0.1)\\
MoCo v2~\cite{chen2020improved}        &
41.7($\mathcolor{black}{\uparrow}$0.1) & 61.6($\mathcolor{black}{\downarrow}$0.1) & 45.6($\mathcolor{black}{\uparrow}$0.3) & 37.6($\mathcolor{black}{\downarrow}$0.0) & 58.7($\mathcolor{black}{\downarrow}$0.0) & 40.5($\mathcolor{black}{\uparrow}$0.1)\\
InfoMin Aug.           & 
42.5($\mathcolor{green}{\uparrow}$0.9) & 62.7($\mathcolor{green}{\uparrow}$1.0) & 46.8($\mathcolor{green}{\uparrow}$1.5) & 38.4($\mathcolor{green}{\uparrow}$0.8) & 59.7($\mathcolor{green}{\uparrow}$1.0) & 41.4($\mathcolor{green}{\uparrow}$1.0) \\
\bottomrule
\end{tabular}
}

\subfloat[Mask R-CNN, R50-\textbf{FPN}, \textbf{6$\x$} schedule]{
\begin{tabular}{c|ccc|ccc}
\toprule
pre-train &
\apbbox{~} &
\apbbox{50} &
\apbbox{75} &
\apmask{~} &
\apmask{50} &
\apmask{75} \\
\hline
random init       & 42.7 & 62.6 & 46.7 & 38.6 & 59.9 & 41.6 \\
supervised        & 42.6 & 62.4 & 46.5 & 38.5 & 59.9 & 41.5 \\
\hline
InfoMin Aug.           & 43.6($\mathcolor{green}{\uparrow}$1.0) & 63.6($\mathcolor{green}{\uparrow}$1.2) & 
47.3($\mathcolor{green}{\uparrow}$0.8) & 
39.2($\mathcolor{green}{\uparrow}$0.7) & 
60.6($\mathcolor{green}{\uparrow}$0.7) & 
42.3($\mathcolor{green}{\uparrow}$0.8) \\
\bottomrule
\end{tabular}
}
\end{table}

\subsection{ResNet-101 with Mask R-CNN, C4 architecture}
The results of Mask R-CNN with R-101 C4 backbone are shown in Table~\ref{tab:voc_r101_C4}. We experimented with \textbf{1$\x$} and \textbf{1$\x$} schedule.
\begin{table}[h]
\centering
\caption{\label{tab:voc_r101_C4} \small COCO object detection and instance segmentation. \textbf{R101-C4}. In the brackets are the gaps to the ImageNet supervised pre-training counterpart.}
\subfloat[Mask R-CNN, R101-\textbf{C4}, \textbf{1$\x$} schedule]{
\begin{tabular}{c|ccc|ccc}
\toprule
pre-train &
\apbbox{~} &
\apbbox{50} &
\apbbox{75} &
\apmask{~} &
\apmask{50} &
\apmask{75} \\
\hline
supervised        & 40.9 & 60.6 & 44.2 & 35.1 & 56.8 & 37.3 \\
InfoMin Aug.           & 42.5($\mathcolor{green}{\uparrow}$1.6) & 62.1($\mathcolor{green}{\uparrow}$1.5) & 46.1($\mathcolor{green}{\uparrow}$1.9) & 36.7($\mathcolor{green}{\uparrow}$1.6) & 58.7($\mathcolor{green}{\uparrow}$1.9) & 39.2($\mathcolor{green}{\uparrow}$1.9) \\
\bottomrule
\end{tabular}
}

\subfloat[Mask R-CNN, R101-\textbf{C4}, \textbf{2$\x$} schedule]{
\begin{tabular}{c|ccc|ccc}
\toprule
pre-train &
\apbbox{~} &
\apbbox{50} &
\apbbox{75} &
\apmask{~} &
\apmask{50} &
\apmask{75} \\
\hline
supervised        & 42.5 & 62.3 & 46.1 & 36.4 & 58.7 & 38.7 \\
InfoMin Aug.           & 43.9($\mathcolor{green}{\uparrow}$1.4) & 63.5($\mathcolor{green}{\uparrow}$1.2) & 47.5($\mathcolor{green}{\uparrow}$1.4) & 37.8($\mathcolor{green}{\uparrow}$1.4) & 60.4($\mathcolor{green}{\uparrow}$1.7) & 40.2($\mathcolor{green}{\uparrow}$1.5) \\
\bottomrule
\end{tabular}
}
\end{table}

\subsection{ResNet-101 with Mask R-CNN, FPN architecture}
The results of Mask R-CNN with R-101 FPN backbone are shown in Table~\ref{tab:voc_r101_fpn}. We experimented with \textbf{1$\x$}, \textbf{2$\x$}, and \textbf{6$\x$} schedule.
\begin{table}[h]
\centering
\caption{\label{tab:voc_r101_fpn} \small COCO object detection and instance segmentation. \textbf{R101-FPN}. In the brackets are the gaps to the ImageNet supervised pre-training counterpart.}
\subfloat[Mask R-CNN, R101-\textbf{FPN}, \textbf{1$\x$} schedule]{
\begin{tabular}{c|ccc|ccc}
\toprule
pre-train &
\apbbox{~} &
\apbbox{50} &
\apbbox{75} &
\apmask{~} &
\apmask{50} &
\apmask{75} \\
\hline
supervised        & 42.0 & 62.3 & 46.0 & 37.6 & 59.1 & 40.1 \\
InfoMin Aug.           & 42.9($\mathcolor{green}{\uparrow}$0.9) & 62.6($\mathcolor{green}{\uparrow}$0.3) & 47.2($\mathcolor{green}{\uparrow}$1.2) & 38.6($\mathcolor{green}{\uparrow}$1.0) & 59.7($\mathcolor{green}{\uparrow}$0.6) & 41.6($\mathcolor{green}{\uparrow}$1.5) \\
\bottomrule
\end{tabular}
}

\subfloat[Mask R-CNN, R101-\textbf{FPN}, \textbf{2$\x$} schedule]{
\begin{tabular}{c|ccc|ccc}
\toprule
pre-train &
\apbbox{~} &
\apbbox{50} &
\apbbox{75} &
\apmask{~} &
\apmask{50} &
\apmask{75} \\
\hline
supervised        & 43.3 & 63.3 & 47.1 & 38.8 & 60.1 & 42.1 \\
InfoMin Aug.           & 44.5($\mathcolor{green}{\uparrow}$1.2) & 64.4($\mathcolor{green}{\uparrow}$1.1) & 48.8($\mathcolor{green}{\uparrow}$1.7) & 39.9($\mathcolor{green}{\uparrow}$1.1) & 61.5($\mathcolor{green}{\uparrow}$1.4) & 42.9($\mathcolor{green}{\uparrow}$0.8) \\

\bottomrule
\end{tabular}
}

\subfloat[Mask R-CNN, R101-\textbf{FPN}, \textbf{6$\x$} schedule]{
\begin{tabular}{c|ccc|ccc}
\toprule
pre-train &
\apbbox{~} &
\apbbox{50} &
\apbbox{75} &
\apmask{~} &
\apmask{50} &
\apmask{75} \\
\hline
supervised        & 44.1 & 63.7 & 48.0 & 39.5 & 61.0 & 42.4 \\
InfoMin Aug.           & 45.3($\mathcolor{green}{\uparrow}$1.2) & 65.0($\mathcolor{green}{\uparrow}$1.3) & 
49.3($\mathcolor{green}{\uparrow}$1.3) & 
40.5($\mathcolor{green}{\uparrow}$1.0) & 
62.5($\mathcolor{green}{\uparrow}$1.5) & 
43.7($\mathcolor{green}{\uparrow}$1.3) \\

\bottomrule
\end{tabular}
}
\end{table}

\subsection{ResNet-101 with Cascade Mask R-CNN, FPN architecture}
The results of Cascade~\cite{cai2018cascade} Mask R-CNN with R-101 FPN backbone are shown in Table~\ref{tab:voc_r101_casfpn}. We experimented with \textbf{1$\x$}, \textbf{2$\x$}, and \textbf{6$\x$} schedule.
\begin{table}[h]
\centering
\caption{\label{tab:voc_r101_casfpn} \small COCO object detection and instance segmentation. \textbf{Cascade R101-FPN}. In the brackets are the gaps to the ImageNet supervised pre-training counterpart.}
\subfloat[Cascade Mask R-CNN, R101-\textbf{FPN}, \textbf{1$\x$} schedule]{
\begin{tabular}{c|ccc|ccc}
\toprule
pre-train &
\apbbox{~} &
\apbbox{50} &
\apbbox{75} &
\apmask{~} &
\apmask{50} &
\apmask{75} \\
\hline
supervised        & 44.9 & 62.3 & 48.8 & 38.8 & 59.9 & 42.0 \\
InfoMin Aug.           & 45.8($\mathcolor{green}{\uparrow}$0.9) & 63.1($\mathcolor{green}{\uparrow}$0.8) & 49.5($\mathcolor{green}{\uparrow}$0.7) & 39.6($\mathcolor{green}{\uparrow}$0.8) & 60.4($\mathcolor{green}{\uparrow}$0.5) & 42.9($\mathcolor{green}{\uparrow}$0.9) \\
\bottomrule
\end{tabular}
}

\subfloat[Cascade Mask R-CNN, R101-\textbf{FPN}, \textbf{2$\x$} schedule]{
\begin{tabular}{c|ccc|ccc}
\toprule
pre-train &
\apbbox{~} &
\apbbox{50} &
\apbbox{75} &
\apmask{~} &
\apmask{50} &
\apmask{75} \\
\hline
supervised        & 45.9 & 63.4 & 49.7 & 39.8 & 60.9 & 43.0 \\
InfoMin Aug.           & 47.3($\mathcolor{green}{\uparrow}$1.4) & 64.6($\mathcolor{green}{\uparrow}$1.2) & 51.5($\mathcolor{green}{\uparrow}$1.8) & 40.9($\mathcolor{green}{\uparrow}$1.1) & 62.1($\mathcolor{green}{\uparrow}$1.2) & 44.6($\mathcolor{green}{\uparrow}$1.6) \\

\bottomrule
\end{tabular}
}

\subfloat[Cascade Mask R-CNN, R101-\textbf{FPN}, \textbf{6$\x$} schedule]{
\begin{tabular}{c|ccc|ccc}
\toprule
pre-train &
\apbbox{~} &
\apbbox{50} &
\apbbox{75} &
\apmask{~} &
\apmask{50} &
\apmask{75} \\
\hline

supervised        & 46.6 & 64.0 & 50.6 & 40.5 & 61.9 & 44.1 \\
InfoMin Aug.           & 48.2($\mathcolor{green}{\uparrow}$1.6) & 65.8($\mathcolor{green}{\uparrow}$1.8) & 
52.7($\mathcolor{green}{\uparrow}$2.1) & 
41.8($\mathcolor{green}{\uparrow}$1.3) & 
63.5($\mathcolor{green}{\uparrow}$1.6) & 
45.6($\mathcolor{green}{\uparrow}$1.5) \\
\bottomrule
\end{tabular}
}
\end{table}

\subsection{ResNeXt-101 with Mask R-CNN, FPN architecture}
The results of Mask R-CNN with X-101 FPN backbone are shown in Table~\ref{tab:voc_x101_fpn}. We experimented with \textbf{1$\x$} and \textbf{2$\x$} schedule.
\begin{table}[h]
\centering
\caption{\label{tab:voc_x101_fpn} \small COCO object detection and instance segmentation. \textbf{X101-FPN}. In the brackets are the gaps to the ImageNet supervised pre-training counterpart. }
\subfloat[Mask R-CNN, X101-\textbf{FPN}, \textbf{1$\x$} schedule]{
\begin{tabular}{c|ccc|ccc}
\toprule
pre-train &
\apbbox{~} &
\apbbox{50} &
\apbbox{75} &
\apmask{~} &
\apmask{50} &
\apmask{75} \\
\hline
supervised        & 44.1 & 64.8 & 48.3 & 39.3 & 61.5 & 42.3 \\

InfoMin Aug.           & 45.0($\mathcolor{green}{\uparrow}$0.9) & 65.3($\mathcolor{green}{\uparrow}$0.5) & 
49.5($\mathcolor{green}{\uparrow}$1.2) & 
40.1($\mathcolor{green}{\uparrow}$0.8) & 
62.3($\mathcolor{green}{\uparrow}$0.8) & 
43.1($\mathcolor{green}{\uparrow}$0.8) \\
\bottomrule
\end{tabular}
}

\subfloat[Mask R-CNN, X101-\textbf{FPN}, \textbf{2$\x$} schedule]{
\begin{tabular}{c|ccc|ccc}
\toprule
pre-train &
\apbbox{~} &
\apbbox{50} &
\apbbox{75} &
\apmask{~} &
\apmask{50} &
\apmask{75} \\
\hline
supervised        & 44.6 & 64.4 & 49.0 & 39.8 & 61.6 & 43.0 \\

InfoMin Aug.           & 45.4($\mathcolor{green}{\uparrow}$0.8) & 65.3($\mathcolor{green}{\uparrow}$0.9) & 
49.6($\mathcolor{green}{\uparrow}$0.6) & 
40.5($\mathcolor{green}{\uparrow}$0.7) & 
62.5($\mathcolor{green}{\uparrow}$0.9) & 
43.8($\mathcolor{green}{\uparrow}$0.8) \\
\bottomrule
\end{tabular}
}
\end{table}

\subsection{ResNeXt-152 with Mask R-CNN, FPN architecture}
The results of Mask R-CNN with X-152 FPN backbone are shown in Table~\ref{tab:voc_x152_fpn}. We experimented with \textbf{1$\x$} schedule.. Note in this case, while InfoMin model is pre-trained on the standard ImageNet-1K dataset, supervised model is pre-trained on ImageNet-5K, which is \textbf{6$\x$} times larger than ImageNet-1K. That said, we found InfoMin still outperforms the supervised pre-training.
\begin{table}[h]
\centering
\vspace{-5pt}
\caption{\label{tab:voc_x152_fpn} \small COCO object detection and instance segmentation. \textbf{X152-FPN}. In the brackets are the gaps to the ImageNet supervised pre-training counterpart. Supervised model is pre-trained on ImageNet-5K, while InfoMin model is only pre-trained on ImageNet-1K.}
\subfloat[Mask R-CNN, X152-\textbf{FPN}, \textbf{1$\x$} schedule]{
\begin{tabular}{c|ccc|ccc}
\toprule
pre-train &
\apbbox{~} &
\apbbox{50} &
\apbbox{75} &
\apmask{~} &
\apmask{50} &
\apmask{75} \\
\hline
supervised        & 45.6 & 65.7 & 50.1 & 40.6 & 63.0 & 43.5 \\
InfoMin Aug.           & 46.4($\mathcolor{green}{\uparrow}$0.8) & 66.5($\mathcolor{green}{\uparrow}$0.8) & 50.8($\mathcolor{green}{\uparrow}$0.7) & 41.3($\mathcolor{green}{\uparrow}$0.7) & 63.6($\mathcolor{green}{\uparrow}$0.6) & 44.4($\mathcolor{green}{\uparrow}$0.9) \\
\bottomrule
\end{tabular}
}
\vspace{-5pt}
\end{table}

\section{Change Log}
\noindent \textbf{arXiv v1} Initial release.

\noindent \textbf{arXiv v2} Paper accepted to NeurIPS 2020. Updated to the camera ready version

\noindent \textbf{arXiv v3} Included more details in disclosure of funding.

\end{document}